\def\eqref#1{equation~\ref{#1}}
\def\1{\bm{1}}
\def\vphi{{\bm{\phi}}}
\def\dw{{\bm{\delta w}}}
\def\vtheta{{\bm{\theta}}}
\def\ve{{\bm{e}}}
\def\vg{{\bm{g}}}
\def\vv{{\bm{v}}}
\def\vw{{\bm{w}}}
\def\vx{{\bm{x}}}
\def\vy{{\bm{y}}}
\def\mB{{\bm{B}}}
\def\mH{{\bm{H}}}
\def\mI{{\bm{I}}}
\def\mU{{\bm{U}}}
\def\mX{{\bm{X}}}
\def\mY{{\bm{Y}}}
\def\meps{{\bm{\epsilon}}}
\DeclareMathAlphabet{\mathsfit}{\encodingdefault}{\sfdefault}{m}{sl}
\SetMathAlphabet{\mathsfit}{bold}{\encodingdefault}{\sfdefault}{bx}{n}
\DeclareMathOperator*{\argmin}{arg\,min}
\newtheorem{theorem}{Theorem}
\newcommand{\reff}{\mathcal{R}_\text{est-curv}}
\newcommand{\reflong}{estimated curvature learning rate ratio}
\definecolor{brown}{HTML}{994F00} 
\definecolor{navy}{HTML}{006CD1} 
\newcommand{\nick}[1]{\textcolor{black}{#1}}
\newtheorem{lemma}[theorem]{Lemma}
\newcommand{\Pdata}{\mathbb{P}_{\text{data}}}
\newcommand{\latestEdits}[1]{{\color{black}{#1}}}
\useunder{\uline}{\ul}{}
\begin{document}


\title{A Random Matrix Theory Approach to Damping in Deep Learning}
\author[]{Diego Granziol, Huawei AI Theory}
\author[]{Nicholas Baskerville, Bristol University}

\begin{abstract}
We conjecture that the inherent difference in generalisation between adaptive and non-adaptive gradient methods in deep learning stems from the increased estimation noise in the flattest directions of the true loss surface. We demonstrate that typical schedules used for adaptive methods (with low numerical stability or damping constants) serve to bias relative movement towards flat directions relative to sharp directions, effectively amplifying the noise-to-signal ratio and harming generalisation. We further demonstrate that the numerical damping constant used in these methods can be decomposed into a learning rate reduction and linear shrinkage of the estimated curvature matrix. We then demonstrate significant generalisation improvements by increasing the shrinkage coefficient, closing the generalisation gap entirely in both logistic regression and several deep neural network experiments. Extending this line further, we develop a novel random matrix theory based damping learner for second order optimisers inspired by linear shrinkage estimation. We experimentally demonstrate our learner to be very insensitive to the initialised value and to allow for extremely fast convergence in conjunction with continued stable training and competitive generalisation. We also find that our derived method works well with adaptive gradient methods such as Adam.

\end{abstract}

\section{Introduction}\label{sec:intro}
The success of deep neural networks across a wide variety of tasks, from speech recognition to image classification,
has drawn wide-ranging interest in their optimisation and their ability to generalise to unseen data. Optimisation is the process of reducing the value of the neural network loss on its training set. Generalisation refers to the loss on an unseen held out test set. The loss of a neural network is a scalar function of the network free parameters (the weights of the neural network) that measures how well the network is performing on the data at hand. It can be defined on a single or multiple examples (known as a batch). Loss functions are by definition greater than or equal to zero. For example, a neural network's output may be the probability of a particular image class label being correct. In this case, where we have the true label of the data item, we can use the discrete cross entropy loss, which has roots in information theory \cite{cover2012elements}. The discrete cross entropy loss is zero if and only if we predict the true label with probability $1$ and non zero otherwise. Hence we drive the neural network to learn not only the true class, but to predict the true class with probability $1$, i.e. to be confident about the correct predictions. Alternative losses in the regression context could be the well known square loss, along with many others, such as the Hinge loss \cite{bishop2006pattern}.

Due to enormous dimensionality of the weight space, where billions of parameters are the norm, more effective measures than random search must be employed to reduce the loss from a random initialisation of the weights. A very simple but highly effective method, which underlies the basis of modern machine learning optimisation is stochastic gradient descent (SGD). In its simplest form, we simply take the gradient of the neural network batch loss and follow the method of steepest descent into a local minima. For a full discussion on stochastic gradient optimisation and associated convergence proofs we recommend \cite{nesterov2013introductory,kushner2003stochastic}

\medskip
There have been many amendments to stochastic gradient descent, including the use of momentum \cite{nesterov2013introductory}. A particularly fruitful area of optimisation research which has found its way into Deep Learning are \textit{adaptive gradient optimisers}. Adaptive gradient methods alter the per-parameter learning rate depending on historical gradient information, which leads to significantly faster convergence of the training loss than non adaptive methods, such as stochastic gradient descent (SGD) with momentum \cite{nesterov2013introductory}. Popular examples include Adam \cite{kingma2014adam}, AdaDelta \cite{zeiler2012adadelta} and RMSprop \cite{tieleman2012lecture}. 

\medskip
However, for practical applications the final results on a held-out test set are more important than the training performance. 
For many image and language problems of interest, the test set performance of adaptive gradient methods is significantly worse than SGD \cite{wilson2017marginal}---a phenomenon that we refer to as the \textit{adaptive generalisation gap}. As a consequence of this effect, many state-of-the-art models, especially for image classification datasets such as CIFAR \cite{yun2019cutmix} and ImageNet \cite{xie2019selftraining,cubuk2019randaugment}, are still trained using SGD with momentum. Although less widely used, another class of adaptive methods which suffer from the same phenomenon \cite{tornstad2020evaluating} are \emph{stochastic second order methods}, which seek to alter the learning rate along the eigenvectors of the Hessian of the loss function. KFAC~\cite{martens2015optimizing} uses a Kroenecker factored approximation of the Fisher information matrix (which can be seen as a positive definite approximation to the Hessian \cite{martens2014new}). Other methods use Hessian--vector products \cite{dauphin2014identifying,martens2010deep} in conjunction with Lanczos methods and conjugate gradients \cite{meurant2006lanczos}. 
All second order and adaptive gradient methods, are endowed with an extra hyper-parameter called the damping or numerical stability co-efficient respectively. This parameter limits the maximal learning rate along the eigenvectors or unit vectors in the parameter space respectively and is typically set to a very small value by practitioners.

\medskip
In this paper we argue that adaptive methods in their typical implementations with small damping/numerical stability co-efficients, are over-confident in their updates in the flattest directions of the loss. We show that this is sub-optimal in terms of optimising the true loss and hence harms generalisation performance. We demonstrate this empirically in an online convex example, where we actively perturb the sharp directions, reducing generalisation without impacting training. We also demonstrate this implicitly for large neural networks by altering the damping/stability constant, which we show alters the effective learning rate ratio between the sharp and flat directions. 

\medskip
Given their widespread adoption, understanding the adaptive generalisation gap has significant implications. In this work we show that altering the numerical stability constant in Adam \cite{choi2019empirical} can be interpreted as applying linear shrinkage estimation to the noisy Hessian estimate enabling the reduction of the mean squared error~\cite{bun2016rotational} in the estimation of the true loss Hessian. We develop this idea further to produce a novel optimal and highly effective adaptive damping scheme to automatically tune the damping and numerical stability constant for KFAC and Adam respectively. 

\section{Background on loss surfaces and generalisation}
We view a neural network (or any supervised machine learning model) as a prediction function  $h(\cdot;\cdot):\mathbb{R}^{d_{x}}\times \mathbb{R}^{P} \rightarrow \mathbb{R}^{d_{y}}$, where $\mathbb{R}^P$ is the space of parameters of the networks (i.e. its weights and biases). A single data point is an element of $\mathbb{R}^{d_x}$ and its label, which could be continuous or discrete, is an element of $\mathbb{R}^{d_y}$. Viewing $\vw\in\mathbb{R}^P$ as parameters of $h$, we have a parametrised family of functions $\mathbb{R}^{d_x}\rightarrow\mathbb{R}^{d_y}$, namely $\mathcal{H}:= \{h(\cdot;\vw):\vw \in \mathbb{R}^{P} \}$. To train the network, i.e. optimise the parameters for a given task, we define a \emph{loss function} $\ell(\vy, \hat{\vy}): \mathbb{R}^{d_{y}} \times \mathbb{R}^{d_{y}} \rightarrow \mathbb{R}$. The objective is to minimise $\ell$ over a given data distribution. More precisely, let $\Pdata$ be some probability distribution on $\mathbb{R}^{d_x}\times \mathbb{R}^{d_y}$, the \emph{data distribution}. The expectation of the loss over data distribution is known as the \emph{true loss}\footnote{The true loss is also often called the \emph{Bayes risk}.}:
\begin{equation}
	\label{eq:truerisk}
	L_{\mathrm{true}}(\vw) = \int\ell(h(\vx;\vw),\vy)d\Pdata(\vx,\vy).
\end{equation}


In practice, given a finite dataset of size $N$, we only have access to the \emph{empirical loss} or \emph{batch loss}\footnote{The empirical loss is also known as the \emph{empirical risk}.}:
\begin{equation}
	\label{eq:emprisk}
	L_{\mathrm{emp}}(\vw) = \frac{1}{N}\sum_{i=1}^{N}\ell(h(\vx_{i};\vw),\vy_{i}),\,\, 	L_{\mathrm{batch}}(\vw) = \frac{1}{B} \sum_{i\in I_B}\ell(h(\vx_{i};\vw),\vy_{i}) 
\end{equation}
where $(\vx_i, \vy_i)$ are i.i.d. samples from $\Pdata$, $I_B\subset\{1, \ldots, N\}$ has cardinality $B$ and $B$ is typically much less that $N$.
When optimising deep neural networks in practice, the batch loss is almost always the quantity used as, amongst other reasons, the empirical loss is far too costly to evaluate given the number of training iterations that are required.
At the conclusion of some given training procedure, $\vw$ has been modified to minimise as far as possible $\ell$ on $\mathcal{D}_{\text{train}} = \{(\vx_i, \vy_i) \mid i=1,\ldots, N\}$ and so is, in general, statistically dependent on the samples in $\mathcal{D}_{\text{train}}$, thus $L_{\text{emp}}$ is no longer an unbiased estimator of $L_{\text{true}}$.
This leads to the possibility of a \emph{generalisation gap}, i.e. $L_{\text{emp}} < L_{\text{true}}$.
The true loss is the quantity that is really practically relevant and it can estimated without bias using a held out \emph{test set} which is just another finite sample from $\Pdata$ independent of $\mathcal{D}_{\text{train}}$. Note that the test loss is not free from variance and so a sufficiently large amount of the original un-partitioned dataset must be held out so that statistically significant differences can be observed.
The losses, viewed as scalar functions on $\mathbb{R}^P$ can be thought of as surfaces in $\mathbb{R}^P$ and our so known as \emph{loss surfaces}. The objective of learning is to find the lowest possible point on $L_{\text{true}}$ given only $L_{\text{batch}}$ (or $L_{\text{emp}}$). In practice this is achieved stochastic gradient optimisation methods such as stochastic gradient descent (SGD) and variants thereof. The gradients required for SGD and optimisation methods are $\nabla L_{\text{batch}}$ where derivatives are with respect the the parameters $\vw\in\mathbb{R}^P$. Similarly, second order methods also make use of the Hessian $\mH_{\text{batch}} = \nabla^2 L_{\text{batch}}$. Note that, just as $L_{\text{batch}}$ is a random estimate of $L_{\text{true}}$ corrupted by some sampling noise, so is $\nabla L_{\text{batch}}$ a noisy estimate of $\nabla L_{\text{true}}$ and similarly for the Hessian.

\subsection{Background on adaptive optimisers}
Stochastic gradient descent updates weights according to the rule
\begin{align}
    \vw_{k+1} = \vw_{k} - \alpha_k\nabla L_{\text{batch}}
\end{align}
where $\vw_k$ are the network parameters after $k$ iterations of SGD and at each iteration a different batch is used. $\alpha_k>0$ is the \emph{learning rate} which, in the simplest setting for SGD, does not depend on $k$, but in general can be varied throughout training to achieve superior optimisation and generalisation.
The general for of adaptive optimiser updates is 
\begin{align}
    \vw_{k+1} = \vw_{k} - \alpha_k\mB^{-1}\nabla L_{\text{batch}}
\end{align}
where $\mB$ is a \emph{pre-conditioning matrix}.
The essential idea of adaptive methods is to use the pre-conditioning matrix to make the geometry of $L_{\text{batch}}$ more favourable to SGD.
One approach is to take $\mB$ to be diagonal, which can be thought of as having per-parameter learning rates adapted to the local loss surface geometry.
More generally, one might seek an approximation $\mB$ to the local loss surface Hessian, effectively changing the basis of the update rule to a natural one, with per-direction learning rates.
For Adam \cite{kingma2014adam}, the most commonplace adaptive optimiser in the deep learning community, $\mB$ is given by the  diagonal matrix with entries $\frac{\sqrt{\langle{g}^{2}_{k}\rangle}+\epsilon}{\langle{g}_{k}\rangle}$. Here $\vg$ is the loss gradient and $\langle\cdot\rangle$ denotes an empirical exponential moving average or iterations.
In principle there is no reason why a certain parameter gradient should not be zero (or very small) and hence the inversion of $\mB$ could cause numerical issues. This is the original reason given by \cite{kingma2014adam} for the numerical stability coefficient $\epsilon$. Similarly so for KFAC for which $\mB = \sum_{i}^{P}\lambda_{i}\vphi_{i}\vphi_{i}^{T}$ where $\{\lambda_{i},\vphi_{i}\}_{i=1}^P$ are the eigenvalue, eigenvector pairs of the kronecker factored approximation to the Hessian.
Hence to each eigenvalue a small damping coefficient $\delta$ is added. Whilst for both adaptive and second order gradient methods, the numerical stability and damping coefficients are typically treated in the literature as extra nuisance parameters which are required to be non-zero but not of great theoretical or practical importance, we strongly challenge this view. In this paper we relate these coefficients to the well known linear shrinkage method in statistics. It is clear from a random matrix theory perspective, that the sub-sampling of the Hessian will lead to the creation of a noise bulk in its spectrum around the origin, precisely the region where the damping coefficient is most relevant.
We show, both experimentally and theoretically, that these coefficients should be considered as extremely important hyper-parameters whose tuning has a strong impact on generalisation. Furthermore, we provide a novel algorithm for their online estimation, which we find effective in preliminary experiments on real networks and datasets.  

\section{Previous Work}\label{sec:motivation}
\begin{figure}[h]
	\centering
	\begin{subfigure}[b]{0.49\linewidth}
		\includegraphics[trim=0cm 0 0 0,clip,width=\textwidth]{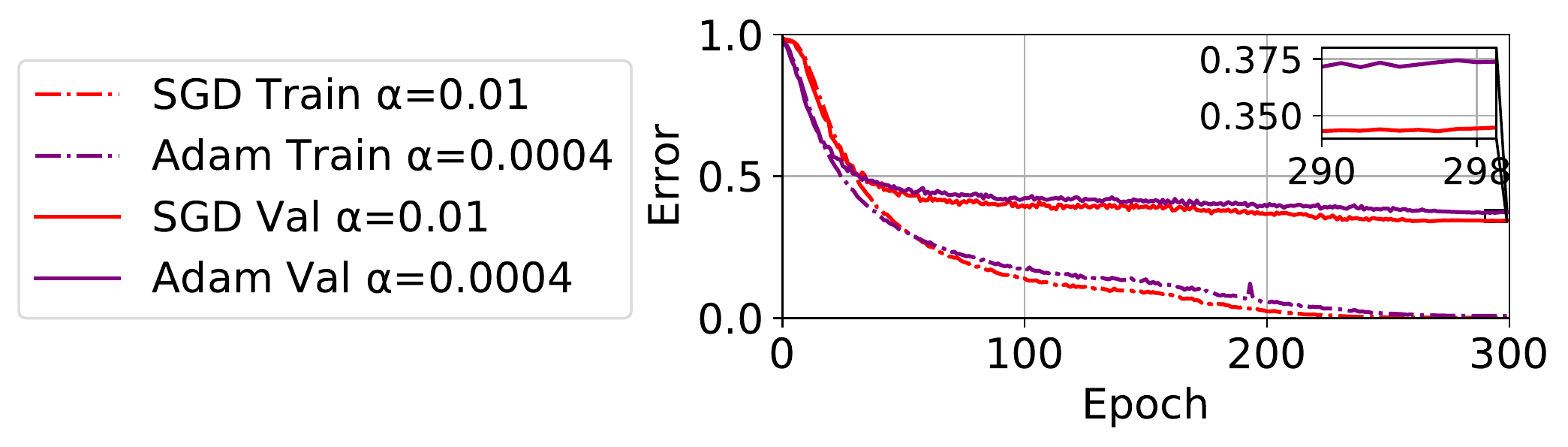}
		\vspace{-15pt}
		\caption{SGD quickly outgeneralises Adam}
		\label{subfig:adamvssgd}
	\end{subfigure}
	\begin{subfigure}[b]{0.49\linewidth}
		\includegraphics[trim=0cm 0 0 0,clip, width=\textwidth]{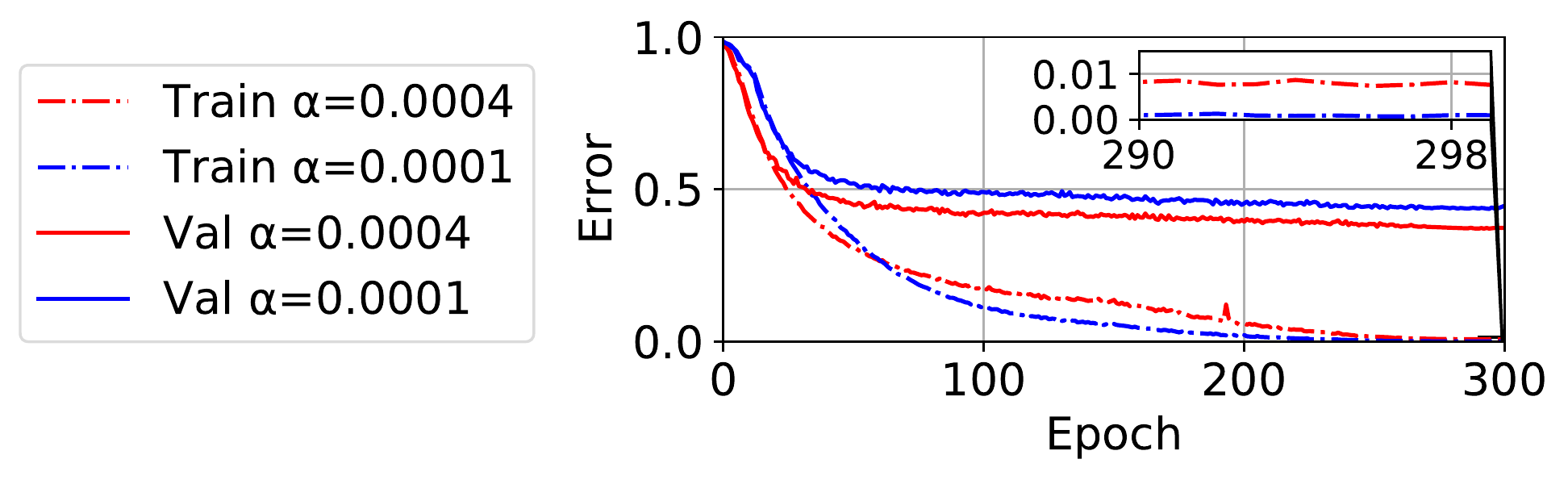}
		\vspace{-15pt}
		\caption{Adam Train/Val for Learning Rates $\{\alpha_{i}\}$}
		\label{subfig:adamlr}
	\end{subfigure}
	\vspace{0pt}
	\caption{\textbf{Adaptive Generalisation Gap and its extent are clearly visible without regularisation.} Train/Val Error on CIFAR-$100$ using VGG-$16$ \latestEdits{without} batch normalisation and weight decay.}
	\label{fig:adamsgd}
\end{figure}
\subsection{Learning Rates}
 Previous work has investigated the relationship
between generalisation and the ratio of learning rates and batch size during SGD optimisation~\cite{jastrzebski2020the}, in addition to
stability analysis \cite{wu2017towards} showing that larger learning rates lead to lower spectral norms (which Bayesian and minimum description length principles suggest lead to better generalisation~\cite{hochreiter1997flat}). \cite{li2019towards} argue that larger learning rates learn hard to generalise, easy to fit patterns better than their lower learning rate counterparts and that this forms part of the generalisation gap. Whilst we expect increased per iteration weight decay $(1-\alpha\gamma)$ from a larger learning rate $\alpha$ with $L_{2}$ regularisation coefficient $\gamma$ to lead to improved generalisation, this effect remains pertinent even with no regularisation as shown in Fig~\ref{subfig:adamlr}. What further remains unclear for adaptive methods is the importance of the \textit{global learning rate}, given that each parameter has its own \textit{individual learning rate}. Additionally, for a damping/numerical stability coefficient $\delta$, the largest possible individual learning rate is given by $\alpha/\delta$, which for typical setups can be orders of magnitude larger than that of SGD. Hence, it is clear that not only the global learning rate is of importance, but also the relative learning rate in different directions, meriting further study.
\subsection{Regularisation}
\latestEdits{One factor hypothesised to contribute to the Adaptive Generalisation Gap is the non-equivalence}
between traditional weight decay and $L_{2}$ regularisation for adaptive \cite{loshchilov2018decoupled} and second order \cite{zhang2018three} methods. However, as shown in Fig~\ref{subfig:adamvssgd}, even when no regularisation is employed, using their respective best settings, the generalisation of SGD strongly outperforms that of Adam. This strongly suggests that SGD \textit{inherently} generalises better than adaptive methods and requires further study. Furthermore a strong understanding as to why weight decay strongly outperforms $L_{2}$ regularisation for Adaptive methods remains elusive and in need of further investigation.
\subsection{Flatness}
\latestEdits{The notion of \textit{flatness} (or correspondingly, \textit{sharpness}) has received considerable attention as a property of training loss minima that is predictive of their generalisation}.
\latestEdits{Although there is no universally accepted definition, \textit{flatness} is
\latestEdits{typically}
defined through properties of the
\latestEdits{Hessian (the second derivative of the loss)},}
such as its spectral norm or trace.
Mathematically when integrating out the product of the maximum likelihood (MLE) solution (given by the final weights) with the prior, the posterior is \textit{shifted} relative to the MLE solution. For \textit{sharp} minima, the difference in loss for a small shift is potentially large, \latestEdits{motivating the study of sharpness in the context of generalisation. Indeed,}
the idea of a \textit{shift} between the training and testing loss surface is prolific in the literature and regularly related to generalisation \cite{he2019asymmetric,izmailov2018averaging,maddox2019simple,yao2018hessian,zhang2018theory,keskar2016large}.

However, the lack of reparameterisation invariance \latestEdits{of the Hessian}~\cite{dinh2017sharp}, has subjected its use for predicting generalisation to criticism \cite{neyshabur2017exploring,tsuzuku2019normalized,rangamani2019scale}. Normalized definitions of flatness, have been introduced \cite{tsuzuku2019normalized,rangamani2019scale} in a PAC-Bayesian framework, although are not widely available to practitioners and require specialist implementations.

Whilst \cite{rangamani2019scale} note that empirically Hessian based sharpness measures correlate with generalisation, we find
that \textit{"flatness"} as defined by the spectral/Frobenius norm of the Hessian of the loss can give strongly misleading results when comparing solutions between adaptive and non-adaptive optimisers. As shown in Figures ~\ref{subfig:sgdc10} and ~\ref{subfig:gadamc10}, it is possible to find better generalising solutions with adaptive optimisers that are nonetheless significantly ``sharper'' than those found by SGD. Furthermore sharpness at a point in weight-space may not be indicative of the flatness of the overall basin of attraction. In order to alleviate this, some authors have considered taking random directions in weightspace \cite{izmailov2018averaging} for large distances, however it is unclear to what extent $2D$ heatmaps are representative in what are typically million dimensional spaces. 
\subsection{Related Work:} To the best of our knowledge, there has been no theoretical work analysing the \emph{adaptive generalisation gap}, with the notable exception of \cite{wilson2017marginal}, who consider the poor generalisation performance of adaptive methods to be inherent (and show this on a simple example). Practical amendments to improve generalisation of adaptive methods have included dynamically switching between Adam and SGD \cite{keskar2017improving}, taking the preconditioning matrix in Adam to the power of $p \in [0,1/2]$ ~\cite{chen2018closing}, employing weight decay instead of $L_2$ regularisation \cite{zhang2018three,loshchilov2018decoupled} and altering the damping or numerical stability constant, typically taken as $10^{-8}$ in Adam \cite{choi2019empirical}. While empirically effective, these alterations lack a clear theoretical motivation. None of the aforementioned works explain why Adam (or adaptive methods in general) inherently generalise worse than SGD. Instead, they show that switching from Adam to SGD, or making Adam more similar to SGD brings improvements. A clear analysis of how SGD differs from adaptive methods and why this impacts generalisation is not provided in the literature and this forms the basis of our paper.

\subsection{Contributions}\label{sec:contrib}
We conjecture that a key driver of the adaptive generalisation gap is the fact that adaptive methods \emph{fail to account for the greater levels of noise associated with their estimates of flat directions in the loss landscape}. The fundamental principle underpinning this conjecture---that sharp directions contain information from the underlying process and that flat directions are largely dominated by noise---is theoretically motivated from the spiked covariance model~\cite{baik2004eigenvalues}. This model has been successfully applied in Principal Component Analysis (PCA), covariance matrix estimation and finance \cite{bloemendal2016principal, everson2000inferring,bun2017cleaning,bun2016my}.  We revisit this idea in the context of deep neural network optimisation.

In particular, we consider a spiked additive signal-plus-noise random matrix model for the batch Hessian of deep neural network loss surfaces. In this model, results from random matrix theory suggest several practical implications for adaptive optimisation. We use linear shrinkage theory \cite{bun2016my,bun2016rotational,bun2017cleaning} to illuminate the role of damping in adaptive optimisers and use our insights to construct an adaptive damping scheme that greatly accelerates optimisation. We further demonstrate that typical hyper-parameter settings for adaptive methods produce a systematic bias in favour flat directions in the loss landscape and that the adaptive generalisation gap can be closed by redressing the balance in favour of sharp directions. To track to bias towards flat vs sharp directions we define the \textit{\reflong}:
\begin{equation}
    \reff := \frac{\alpha_{\text{flat}}}{\alpha_{\text{sharp}}}
\end{equation}
where $\alpha_{\mathrm{flat}}$ and $\alpha_{\mathrm{sharp}}$ are the learning rates along the flat and sharp directions,  respectively and this ratio encapsulates the noise-to-signal ratio as motivated by our conjecture (the terms \textit{flat} and \textit{sharp} are defined more precisely in Section \ref{sec:spiked_model}). 
\section{The Spiked Model for the Hessian of the Loss}\label{sec:spiked_model}
\label{sec:theory}

\subsection{Key Result: Sharp directions from the True Loss surface survive, others wash out}

We can rewrite the (random) batch hessian $\mH_{\text{batch}}$ as the combination  of the (deterministic) true hessian $\mH_{\text{true}}$ plus some fluctuations matrix:
\begin{equation}\label{eq:additive_noise}
    \mH_{\text{batch}}(\vw) = \mH_{\text{true}}(\vw) + \mX(\vw).
\end{equation}
In \cite{granziol2020learning} the authors consider the difference between the batch and empirical Hessian, although this is not of interest for generalisation, the framework can be extended to consider the true Hessian. The authors further show, under the assumptions of Lipschitz loss continuity, almost everywhere double differentiable loss and that the data are drawn i.i.d from the data generating distribution that the elements of $\mX(\vw)$ converge to normal random variables\footnote{Note that although a given batch Hessian is a fixed deterministic property, we are interested in generic properties of batches drawn at random from the data generating distribution for which we make statements and can hence model the fluctuations matrix as a random matrix.}. Under the assumptions of limited dependence between and limited variation in the variance of the elements of the fluctuations matrix, the spectrum of the fluctuations matrix converges to the Wigner semi-circle law \cite{granziol2020learning,wigner1993characteristic}, i.e. weakly almost surely
\begin{align}
    \frac{1}{P}\sum_{i=1}^P \delta_{\lambda_i(\mX)} \rightarrow \mu_{SC},
\end{align}
where the $\lambda_i(\mX)$ are the eigenvalues of $\mX$ and $d\mu_{SC}(x) \propto \sqrt{2P^2 - x^2}dx$.
The key intuition in this paper is that sharp directions of the true loss surfaces, that is directions in which the true Hessian has its largest eigenvalues, are more reliably estimated by the batch loss than are the flat directions (those with small Hessian eigenvalues).
This intuition is natural in random matrix theory and is supported by results such as the following.
\nick{\begin{theorem}
	\label{theorem:overlap}
Let $\{\vtheta_i\}_{i=1}^P$,  $\{\vphi\}_{i=1}^P$ be the orthonormal eigenbasis of the true Hessian $\nabla^2 L_{\mathrm{true}}$ and batch Hessian $\nabla^2 L_{\mathrm{batch}}$ respectively. Let also $\nu \geq \ldots \geq \nu_P$ be the eigenvalues of $\nabla^2 L_{\mathrm{true}}$. Assume that $\nu_i = 0$ for all $i > r$, for some fixed $r$. Assume that $\mX$ is a generalised Wigner matrix. Then as $P\rightarrow\infty$ the following limit holds almost surely	\begin{equation}\label{eq:overlap}
		|\vtheta_{i}^{T}\vphi_{i}|^{2} \rightarrow \begin{cases} 1-\frac{P\sigma^{2}}{B\nu{i}^{2}} &\mbox{if } |\nu_{i}| > \sqrt{\frac{P}{B}}\sigma,\\
			0 & \mbox{otherwise}, \end{cases}
	\end{equation}
where $\sigma$ is the sampling noise per Hessian element.
\end{theorem}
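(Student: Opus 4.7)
The statement is a BBP-style outlier and overlap result, so my plan is to reduce it to standard arguments for additive rank-$r$ perturbations of Wigner matrices. I start from the additive decomposition (\ref{eq:additive_noise}), $\mH_{\text{batch}} = \mH_{\text{true}} + \mX$, where $\mH_{\text{true}} = \sum_{j=1}^r \nu_j \vtheta_j \vtheta_j^T$ has finite rank $r$ and, by the semicircle result quoted just before the theorem, $\mX$ has limiting spectral density $\mu_{SC}$ supported on $[-2s, 2s]$ with $s^2 = \sigma^2 P/B$. Write $G_X(z) = (zI - \mX)^{-1}$ for the resolvent and $g_{sc}(z)$ for the Stieltjes transform of $\mu_{SC}$. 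The key analytic input I will invoke is the isotropic local semicircle law for generalised Wigner matrices, which gives almost-sure concentration of quadratic forms $\vu^T G_X(z) \vv \to g_{sc}(z) \vu^T \vv$ for deterministic unit vectors $\vu, \vv$ and $z$ outside the bulk.

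For an outlier eigenvalue $\lambda_i$ of $\mH_{\text{batch}}$ with eigenvector $\vphi_i$, I rearrange $(\mH_{\text{true}} + \mX)\vphi_i = \lambda_i \vphi_i$ into
\begin{equation}
\vphi_i = G_X(\lambda_i)\, \mH_{\text{true}} \vphi_i = \sum_{j=1}^r \nu_j \big(\vtheta_j^T \vphi_i\big)\, G_X(\lambda_i)\vtheta_j.
\end{equation}
Projecting onto each $\vtheta_k$ produces an $r \times r$ secular equation $\det\big(I_r - M(\lambda_i)\big) = 0$ with $M(\lambda)_{kj} = \nu_j \vtheta_k^T G_X(\lambda)\vtheta_j$. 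By isotropy, $M(\lambda) \to \mathrm{diag}(\nu_1, \ldots, \nu_r)\, g_{sc}(\lambda)$, so in the limit each outlier is characterised by $\nu_i g_{sc}(\lambda_i) = 1$. Using the explicit semicircle form $g_{sc}(z) = (z - \sqrt{z^2 - 4s^2})/(2s^2)$ this solves to $\lambda_i = \nu_i + s^2/\nu_i$, which lies outside $[-2s,2s]$ precisely when $|\nu_i| > s = \sigma\sqrt{P/B}$, recovering the stated threshold.

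To extract the overlap, I take the squared norm of the resolvent identity above. In the limit the off-diagonal terms drop (the spike basis is asymptotically diagonalised by the secular equation), leaving
\begin{equation}
1 = \|\vphi_i\|^2 = \nu_i^2 \, |\vtheta_i^T \vphi_i|^2 \, \vtheta_i^T G_X(\lambda_i)^2 \vtheta_i + o(1).
\end{equation}
Using $G_X(z)^2 = -G_X'(z)$ and isotropy again, $\vtheta_i^T G_X(\lambda_i)^2 \vtheta_i \to -g_{sc}'(\lambda_i)$. Differentiating the semicircle Stieltjes transform and substituting $\lambda_i = \nu_i + s^2/\nu_i$ gives $-g_{sc}'(\lambda_i) = 1/(\nu_i^2 - s^2)$, whence
\begin{equation}
|\vtheta_i^T \vphi_i|^2 \longrightarrow \frac{\nu_i^2 - s^2}{\nu_i^2} = 1 - \frac{P\sigma^2}{B\nu_i^2},
\end{equation}
as claimed. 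For the subcritical case $|\nu_i| \leq s$, the perturbation is absorbed into the bulk: the $i$th eigenvalue of $\mH_{\text{batch}}$ sticks to the edge of the semicircle, and delocalisation of bulk Wigner eigenvectors (another consequence of the isotropic local law) forces $|\vtheta_i^T \vphi_i|^2 \to 0$.

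\textbf{Main obstacle.} The routine steps (computing $g_{sc}$ and its derivative, solving the scalar secular equation) are mechanical. The substantive ingredient is the isotropic local semicircle law, which is what upgrades the averaged semicircle convergence already invoked in the excerpt to the pointwise quadratic-form statements $\vtheta_i^T G_X(z) \vtheta_j \to \delta_{ij} g_{sc}(z)$ needed above; without it the diagonalisation of the secular matrix and the almost-sure limit of the overlap cannot be justified. I would therefore state the required isotropic law as a cited black-box for generalised Wigner matrices and then the rest of the argument proceeds as sketched.
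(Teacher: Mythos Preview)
Your proposal is correct and, in content, coincides with the paper's approach: both rest on the BBP-type outlier/overlap theory for finite-rank deformations of Wigner matrices. The difference is purely one of presentation. The paper's proof is a two-sentence citation --- it invokes Benaych-Georges and Nadakuditi (2011) for the explicit overlap formula in the GOE case and Capitaine (2016, Theorem~16) for the extension to generalised Wigner matrices, then fixes the scaling so that the semicircle edge sits at $\sigma\sqrt{P/B}$ and reads off (\ref{eq:overlap}). You instead reproduce the resolvent argument (secular equation for the outlier location, the identity $G_X^2=-G_X'$, and the evaluation of $-g_{sc}'$ at $\lambda_i=\nu_i+s^2/\nu_i$) that those cited papers make rigorous. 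Your identification of the isotropic local semicircle law as the essential black box is exactly on point; the paper itself remarks, immediately after the theorem, that a rigorous proof beyond the Wigner case ``would require something like an optimal local semi-circle law,'' so you and the authors agree on where the real work lies. Your sketch is thus a more self-contained version of the same proof, while the paper simply outsources it to the literature.
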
}

\begin{proof}
    This is a direct application of a result of \cite{capitaine2016spectrum} which is given more explicitly in the case of GOE Wigner matrices by \cite{benaych2011eigenvalues}. In particular, we use a scaling of $\mX$ such that the right edge of the support of its spectral semi-circle is roughly at $P^{1/2}B^{-1/2}\sigma$. The expression in Section 3.1 of \cite{benaych2011eigenvalues} can then be applied to $P^{-1/2}\mH_{\text{batch}}$ and re-scaled in $\sqrt{P}$ to give the result. Note that the substantiation of the expression from \cite{benaych2011eigenvalues} in the case of quite general Wigner matrices is given by Theorem 16 of \cite{capitaine2016spectrum}. 
\end{proof}

\vspace{-5pt}

Results like Theorem \ref{theorem:overlap} are available for matrix models other than Wigner, such as rotationally invariant model \cite{belinschi2017outliers}, and are conjectured to hold for quite general\footnote{Roughly speaking, models for which a local law can be established \cite{erdHos2017dynamical}.} models \cite{benaych2011eigenvalues}. We prove in the appendix section \ref{sec:app_sc}, under some conditions, convergence of the spectral measure of $P^{-1/2}\mX$ to the semi-circle. This is necessary to obtain (\ref{eq:overlap}), but not sufficient. The technicalities to rigorously prove Theorem \ref{theorem:overlap} without assuming a Wigner matrix for $\mX$ are out of scope for the present work, requiring as they would something like an optimal local semi-circle law for $\mX$ \cite{erdHos2017dynamical}. We require only the general heuristic principle from random matrix theory encoded in (\ref{eq:overlap}), namely that \emph{only sharp directions retain information from the true loss surface}. It is expected that this principle will hold for a much wider class of random matrices than those for which it has been rigorously proven. This is acutely important for adaptive methods which rely on curvature estimation, either explicitly for stochastic second order methods or implicitly for adaptive gradient methods.


\begin{figure}[h!]
	\centering
	\begin{subfigure}[b]{0.36\linewidth}
		\includegraphics[trim=0cm 0 0 0,clip,width=\textwidth]{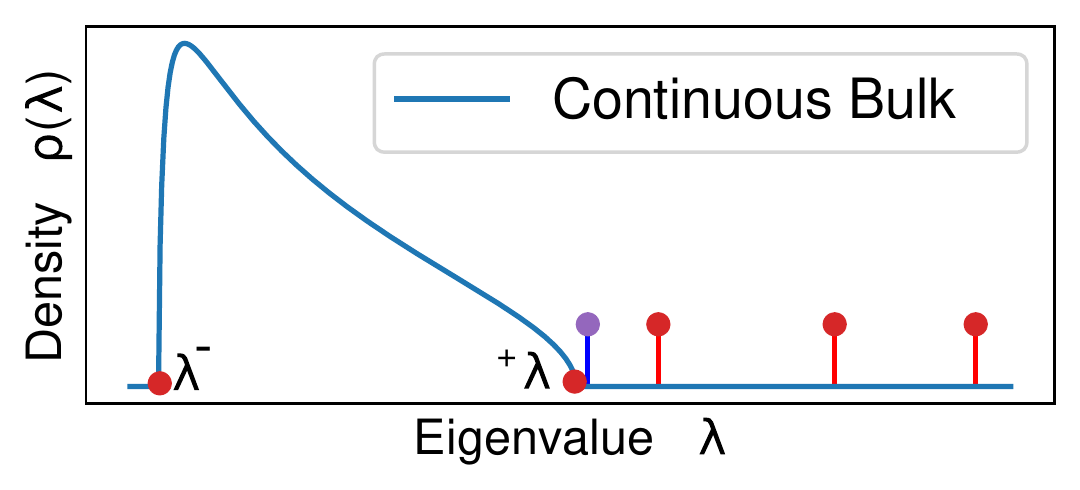}
		\vspace{-15pt}
		\caption{Hypothetical $\rho(\lambda)$}
		\label{subfig:standardmpoutlier}
	\end{subfigure}
	\begin{subfigure}[b]{0.31\linewidth}
		\includegraphics[trim=0cm 0 0 0,clip,width=\textwidth]{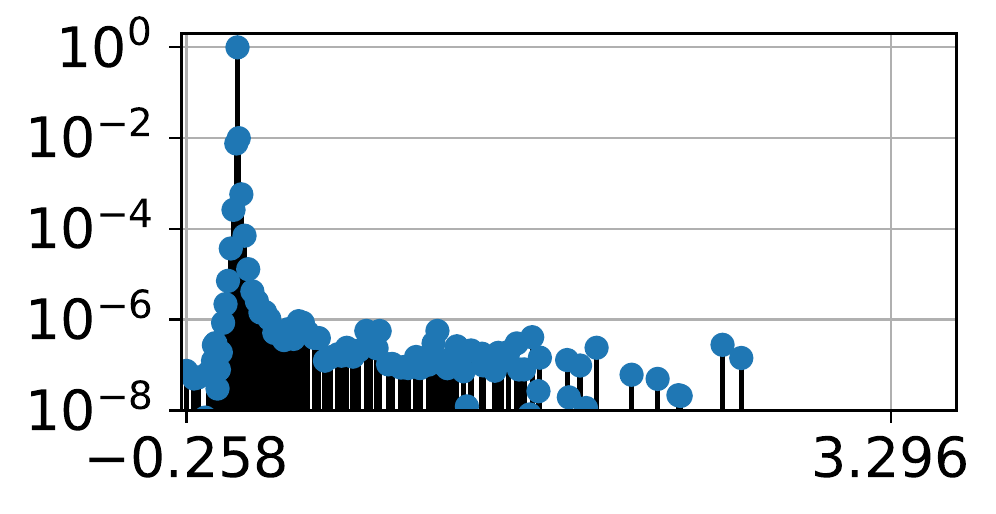}
		\vspace{-15pt}
		\caption{Val Acc$ = 94.3$, SGD}
		\label{subfig:sgdc10}
	\end{subfigure}
	\begin{subfigure}[b]{0.31\linewidth}
		\includegraphics[trim=0cm 0 0 0,clip, width=\textwidth]{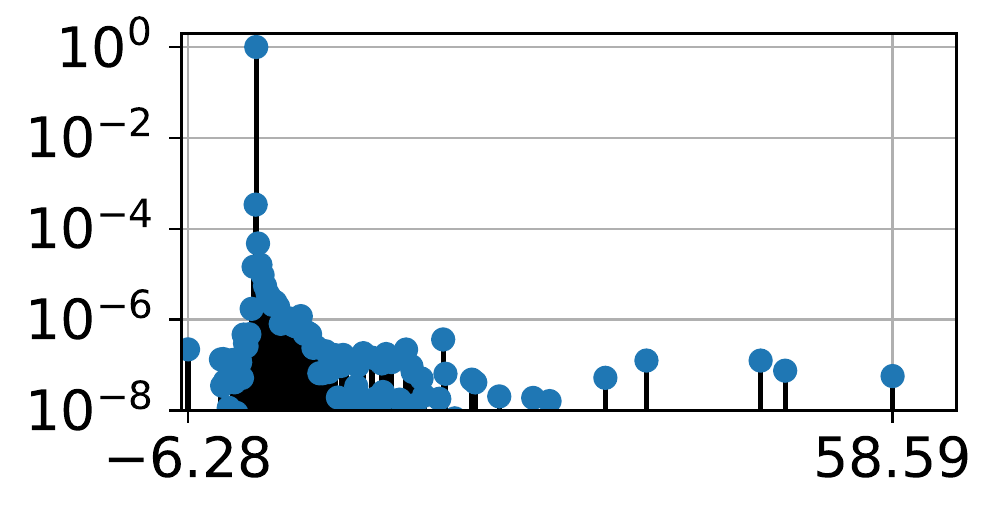}
		\vspace{-15pt}
		\caption{Val Acc$= 95.1$, Adam}
		\label{subfig:gadamc10}
	\end{subfigure}
	\caption{(a) Hypothetical spectral density plot with a sharply supported continuous bulk region, a finite size fluctuation {\color{blue}shown in blue} corresponding to the Tracy-Widom region and three well-separated outliers {\color{red}shown in red}. (b,c) VGG-$16$ Hessian on the CIFAR-$10$ dataset at epoch $300$ for SGD and Adam respectively. Note the "sharper" solution has better validation accuracy.}
	\label{tab:whatissharp}
	\vspace{-10pt}
\end{figure}

\medskip
The spectrum of the noise matrix occupies a continuous region that is sharp in the asymptotic limit \cite{bun2017cleaning} known as \textit{bulk} supported between $[\lambda_{-},\lambda_{+}]$ \cite{bun2017cleaning,bun2016my,bun2016rotational} and observed in DNNs  \cite{granziol2019mlrg,papyan2018full,sagun2017empirical}. Within this bulk
\latestEdits{eigenvectors are uniformly distributed on the unit sphere \cite{benaych2011eigenvalues} and 
all information about the original eigenvalue/eigenvector pairs is lost \cite{baik2005phase}.} Hence from a theoretical perspective it makes no sense to estimate these directions and move along them accordingly. 
An eigenvalue, $\lambda_i$, corresponds to a \textit{flat} direction if $\lambda_{i} \leq \lambda_{+}$. For finite-size samples and network size, there exists a region beyond the predicted asymptotic support of the noise matrix, called the Tracy--Widom region \cite{tracy1994level, el2007tracy}, where there may be isolated eigenvalues which are part of the noise matrix spectrum (also shown in Fig.~\ref{subfig:standardmpoutlier}). \nick{The width of the Tracy--Widom region is very much less than that of the bulk.} Anything beyond the Tracy--Widom region $\lambda_{i} \gg \lambda_{+}$, $\lambda_{i} \ll \lambda_{-}$ is considered an outlier and corresponds to a \textit{sharp} direction. \emph{Such directions represent underlying structure from the data}. The eigenvectors corresponding to these eigenvalues can be shown to lie in a cone around their true values \cite{benaych2011eigenvalues} (see Theorem \ref{theorem:overlap}).
In Fig.~\ref{subfig:sgdc10}, we show the Hessian of a VGG-$16$ network at the $300\textsuperscript{th}$ epoch on CIFAR-100. Here, similar to our hypothetical example, we see a continuous region, followed by a number of eigenvalues which are close to (but not within) the bulk, and finally, several clear outliers.

\section{Detailed experimental investigation of Hessian directions}\label{sec:logisticexp}
\label{subsec:mnist}
In this section we seek to validate our conjecture that movements in the sharp direction of the loss landscape are inherently vital to generalisation by studying a convex non-stochastic example. For such a landscape there is only a single global minimum and hence discussions of bad minima are not pertinent.
We implement a second-order optimiser based on the Lanczos iterative algorithm \cite{meurant2006lanczos} (LanczosOPT) against a gradient descent (GD) baseline. We employ a training set of $1$K MNIST \cite{lecun1998mnist} examples using logistic regression and validate on a held out test set of $10$K examples. Each optimiser is run for $500$ epochs. The Lanczos algorithm
 is an iterative algorithm for learning a subset of the eigenvalues/eigenvectors of any Hermitian matrix, requiring only matrix--vector products. When the number of Lanczos steps, $m$, is significantly larger than the number of outliers, the outliers in the spectrum are estimated effectively~\cite{granziol2019mlrg}. Since the number of well-separated outliers from the spectral bulk is at most the number of classes \cite{papyan2018full} (which is $n_{c}=10$ for this dataset), we expect the Lanczos algorithm to pick out these well-separated outliers when the number of iterations $k \gg n_{c}$ \cite{granziol2019mlrg,meurant2006lanczos} and therefore use $k=50$. 
To investigate the impact of scaling steps in the Krylov subspace given by the sharpest directions, we consider the update $\vw_{k+1}$ of the form:
\begin{equation}
\label{eq:lanczosopt}
\vw_{k} -\alpha\bigg(\frac{1}{\eta}\sum_{i=1}^{k}\frac{1}{\lambda_{i}+\delta}\vphi_{i}\vphi_{i}^{T}\nabla L(\vw_{k})+\sum_{i=k+1}^{P}\frac{1}{\delta}\vphi_{i}\vphi_{i}^{T}\nabla L(\vw_{k})\bigg) 
\end{equation}
where $P=7850$ (the number of model parameters) and hence the vast majority of flat directions remain unperturbed. To explore the effect of sharp directions more explicitly, we have introduced perturbations to the optimiser (denoted LOPT$[\eta]$), in which we reduce the first term in the parenthesis of Equation~\ref{eq:lanczosopt} by a factor of $\eta$ (we explore scaling factors of $3$ and $10$). This reduces movement in sharp directions, consequently increases reliance on flat directions (which are left largely unperturbed). 
For \latestEdits{a} fixed $\alpha$, $\delta$ controls the \reflong. 
\label{subsec:logisticexp}
\vspace{-5pt}
\paragraph{Experimental Results:} 
In Fig.~\ref{fig:logisticheatmap}, where we show in heat map form the difference from the best training and testing error as a function of $\delta$ and $\eta$, we observe evidence consistent with our central hypothesis. As we increase $\reff$ (by decreasing the value of $\delta$ for a fixed $\alpha$ value of $0.01$), the generalisation of the model suffers correspondingly. 

\begin{figure}
	\begin{subfigure}[b]{0.48\linewidth}
		\includegraphics[width=\textwidth]{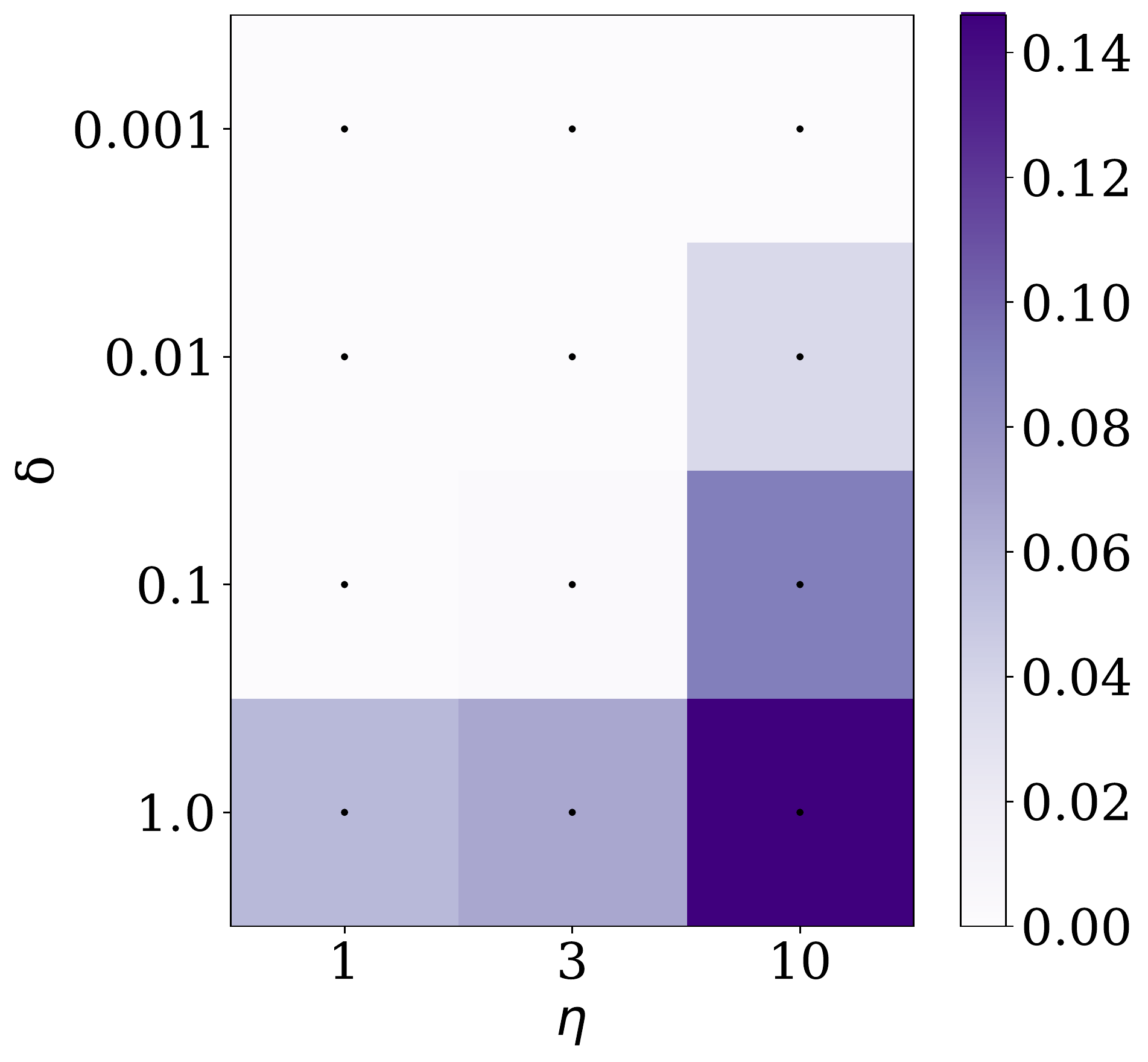}
		\caption{$\Delta(\delta,\eta)$ Training}
	\end{subfigure}
	\begin{subfigure}[b]{0.48\linewidth}
		\includegraphics[width=\textwidth]{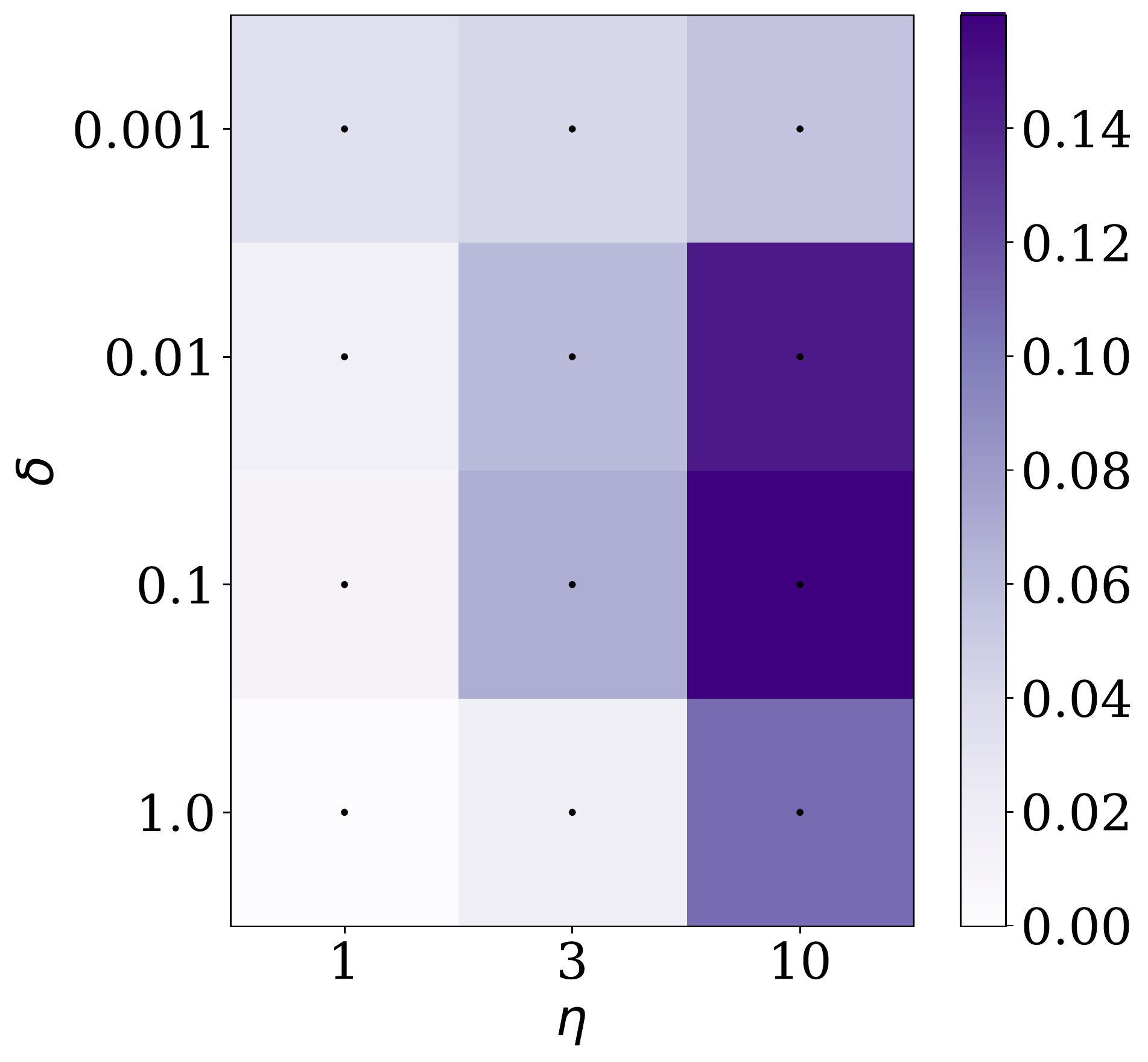}
		\caption{$\Delta(\delta,\eta)$  Testing}
	\end{subfigure}
	\captionof{figure}{Error change with damping/sharp direction perturbation $\delta, \eta$ in LanczosOPT, relative to the single best run. Darker regions indicate higher error.}
	\label{fig:logisticheatmap}
\end{figure}

\begin{figure}[h!]
	\centering
	\begin{subfigure}[b]{0.57\textwidth}
		\includegraphics[width=\textwidth,trim={0cm 0.2cm 0 0},clip]{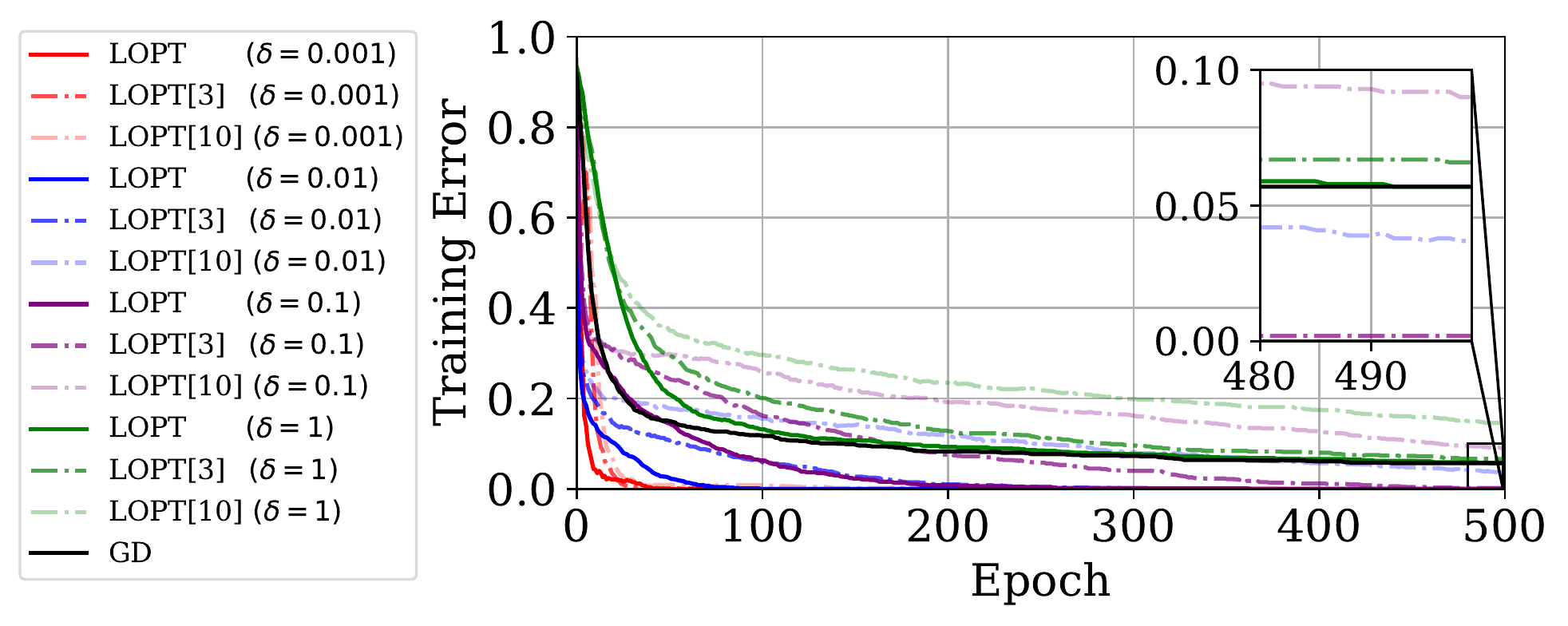}
	\end{subfigure}
	\begin{subfigure}[b]{0.42\textwidth}
		\includegraphics[width=\textwidth,trim={0cm 0.25cm 0 0},clip]{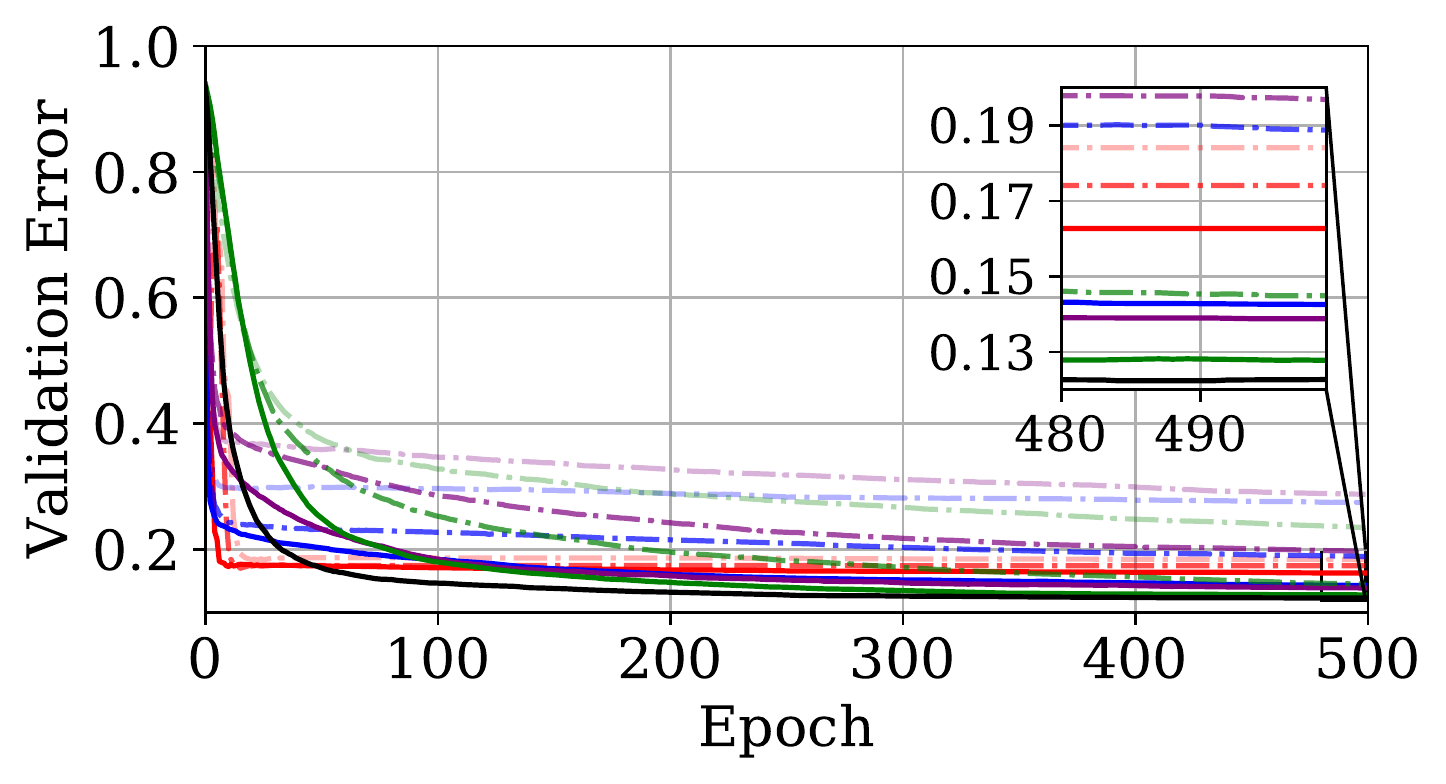}
	\end{subfigure}
	\vspace{-0.3cm}
	\caption{Training/test error of LanczosOPT/Gradient Descent (LOPT/GD) optimisers for logistic regression on the MNIST dataset with fixed learning rate $\alpha=0.01$ across different damping values, $\delta$. LOPT$[\eta]$ denotes a modification to the LOPT algorithm that perturbs a subset of update directions by a factor of $\eta$. Best viewed in colour. }
	\label{fig:logisticlrandeps}
\end{figure}

For each fixed value of $\delta$, we see clearly that perturbations of greater magnitude cause greater harm to generalisation than training. We also note that for larger values of $\delta$ the perturbed optimisers suffer more gravely in terms of the effect on both training and validation.
We show the full training curves in Figure \ref{fig:logisticlrandeps}.
We observe that the generalisation of all algorithms is worsened by explicit limitation of movement in the sharp directions (and an increase of \reflong), however for extremely low damping measures (which are typical in adaptive optimiser settings) there is no or very minimal impact in training performance (upper region of Fig.~\ref{fig:logisticheatmap} (a)).

\paragraph{Fashion MNIST:}
\latestEdits{We repeat} the experimental procedure for the FashionMNIST dataset\latestEdits{~\cite{xiao2017fashion}}, which paints an identical picture (at slightly higher testing error) The full training curves are given in Figure \ref{fig:fashionlogisticlrandeps}.
\begin{figure}[h!]
	\centering
	\begin{subfigure}[b]{0.57\textwidth}
		\includegraphics[width=\textwidth,trim={0cm 0.2cm 0 0},clip]{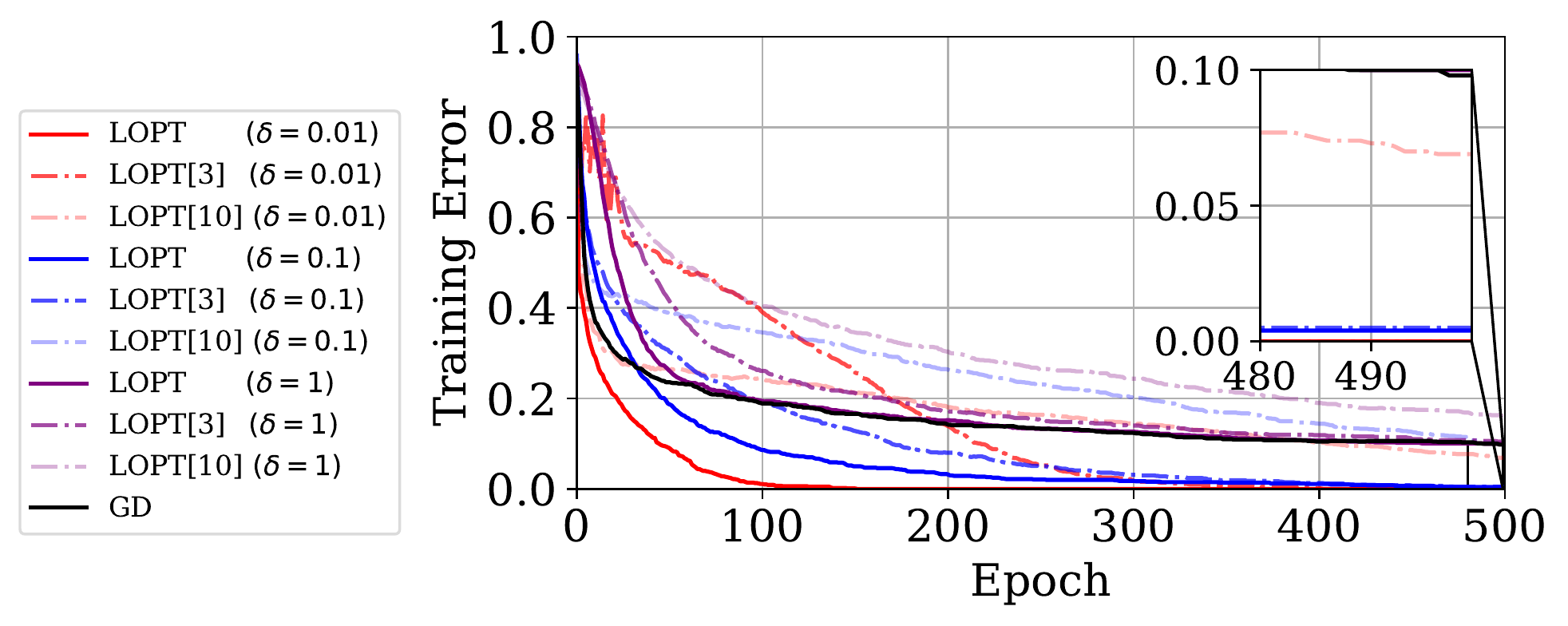}
	\end{subfigure}
	\begin{subfigure}[b]{0.42\textwidth}
		\includegraphics[width=\textwidth,trim={0cm 0.25cm 0 0},clip]{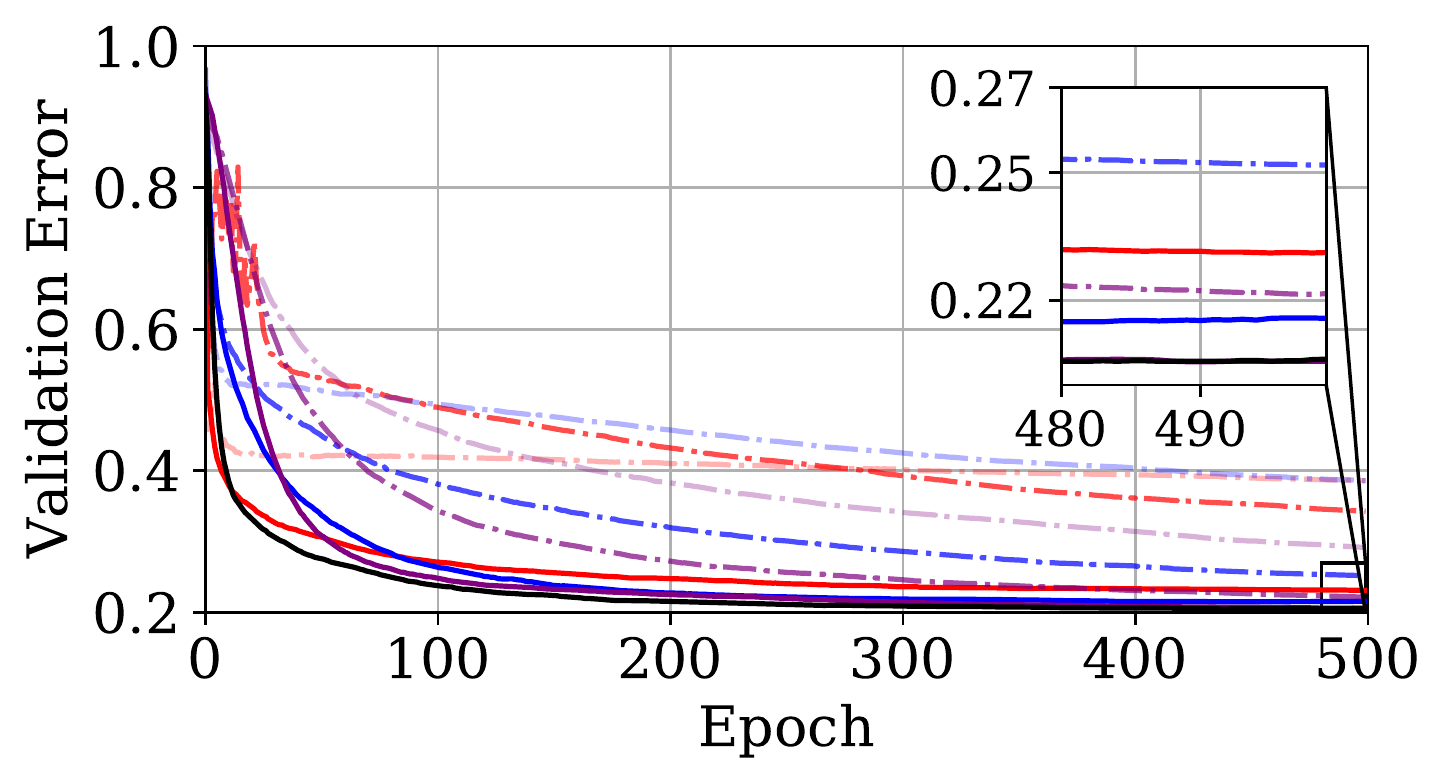}
	\end{subfigure}
	\vspace{-0.3cm}
	\caption{Training/test error of LanczosOPT/Gradient Descent (LOPT/GD) optimisers for logistic regression on the FashionMNIST dataset with fixed learning rate $\alpha=0.01$ across different damping values, $\delta$. LOPT$[\eta]$ denotes a modification to the LOPT algorithm that perturbs a subset of update directions by a factor of $\eta$. Best viewed in colour. }
	\label{fig:fashionlogisticlrandeps}
\end{figure}

\section{The role of damping}\label{sec:nnexperiments}
Consider a general iterative optimiser that seeks to minimise the scalar loss $L(\vw)$ for a set of model parameters $\vw \in \mathbb{R}^P$. Recall the $k+1$-th iteration of such an optimiser can be written\footnote{Ignoring additional features such as momentum and explicit regularisations.} as follows:
\begin{equation}
\vw_{k+1} \leftarrow \vw_{k} - \alpha_k \mB^{-1} \nabla L_{\mathrm{batch}}(\vw_{k})
\end{equation}
where $\alpha_k$ is the global learning rate. For SGD, $\mB = \mI$ whereas for adaptive methods, $\mB$ typically
\latestEdits{comprises}
some form of approximation to the Hessian i.e. $\mB \approx \nabla^{2}L_{\mathrm{batch}}(\vw_{k})$. 
Writing this update in the eigenbasis of the Hessian\footnote{We assume this to be positive definite or that we are working with a positive definite approximation thereof.} $\nabla^{2}L_{\mathrm{batch}}(\vw_{k}) = \sum_{i}^{P}\lambda_{i}\vphi_{i}\vphi_{i}^{T} \in \mathbb{R}^{P\times P}$, where $\lambda_1\geq \lambda_2\geq \dots \geq \lambda_{P} \geq 0$ represent the ordered scalar eigenvalues, the parameter step takes the form:
\begin{equation}
\label{eq:secondorderopt}
\begin{aligned}
\vw_{k+1} = \vw_{k} - \sum_{i=1}^{P}\frac{\alpha}{\lambda_{i}+\delta}\vphi_{i}\vphi_{i}^{T}\nabla L_{\nick{\mathrm{batch}}}(\vw_{k}).
\end{aligned}
\end{equation}
Here, $\delta$ is a damping (or numerical stability) term. This damping term (which is typically grid searched \cite{dauphin2014identifying} or adapted during training \cite{martens2015optimizing}) can be interpreted as a trust region \cite{dauphin2014identifying} that is required to stop the optimiser moving too far in directions deemed flat ($\lambda_{i} \approx 0$), known to dominate the spectrum in practice \cite{granziol2020learning,papyan2018full,ghorbani2019investigation}, and hence diverging. In the common adaptive optimiser Adam \cite{kingma2014adam}, it is set to $10^{-8}$. For small values of $\delta$, $\alpha$ must also be small to avoid optimisation instability, hence global learning rates and damping are coupled in adaptive optimisers.
\subsection{Adaptive updates, damping and the \reflong} \label{subsec:dampingadaptive}
The learning rate in the flattest ($\lambda \approx 0$) directions is approximately $\frac{\alpha}{\delta}$, which is larger than the learning rate in the sharpest ($\lambda_{i} \gg \delta$) directions  $\frac{\alpha}{\delta+\lambda_{i}}$. 
This difference in per direction effective learning rate makes the best possible (damped) training loss reduction under the assumption that the loss function can be effectively modelled by a quadratic \cite{Martens2016}. Crucially, however, it is agnostic to how accurately each eigenvector component of the update estimates the true underlying loss surface, which is described in Theorem \ref{theorem:overlap}. Assuming that the smallest eigenvalue $\lambda_{P} \ll \delta$,  we see that $\reff = 1+ \frac{\lambda_{1}-\lambda_{P}}{\delta}$. This is in contrast to SGD where 
$\vw_{k+1} = \vw_{k} - \sum_{i=1}^{P}\alpha\vphi_{i}\vphi_{i}^{T}\nabla L_{\nick{\mathrm{batch}}}(\vw_{k})$ and hence $\reff = 1$. Note that we can ignore the effect of the overlap between the gradient and the eigenvectors of the batch Hessian because we can rewrite the SGD update in the basis of the batch Hessian eigenvectors and hence reduce the problem to one of the relative learning rates.

The crucial point to note here is that the difference in $\reff$ is primarily controlled by the damping parameter: smaller values yield a larger $\reff$, skewing the parameter updates towards flatter directions.


To further explore our central conjecture for modern deep learning architectures (where a large number of matrix--vector products is infeasible) we employ the KFAC \cite{martens2015optimizing} and Adam \cite{kingma2014adam} optimisers on the VGG-$16$~\cite{simonyan2014very} network on the CIFAR-$100$~\cite{krizhevsky2009learning} dataset. The VGG-16 allows us to isolate the effect of $\reff$, as opposed to the effect of different regularisation implementations for adaptive and non-adaptive methods as discussed by \cite{loshchilov2018decoupled,zhang2018three}.
\subsection{VGG16: a laboratory for adaptive optimisation}
The deep learning literature contains very many architectural variants of deep neural networks and a large number of engineering ``tricks'' which are employed to obtain state of the art results on a great variety of different tasks. The theory supporting the efficacy of such tricks and architectural designs is often wanting and sometimes entirely absent. Our primary objective in this work is to illuminate some theoretical aspects of adaptive optimisers such as appropriate damping and Hessian estimation, so we require a simple and clean experimental environment free from, where possible, interference from as many different competing effects. To this end, the VGG architecture \cite{simonyan2014very} for computer vision is particularly appropriate. With 16 layers, the VGG has over $16$ million parameters and is capable of achieving competitive test error on a variety of standard computer vision datasets while being trained without batch normalisation \cite{ioffe2015batch} or weight decay. Indeed, features such as weight decay and batch normalisation obscure the effect of learning rate and damping, meaning that even quite poor choices can ultimately give reasonable results given sufficient training iterations\cite{granziol2020learning}. In contrast the VGG clearly exposes the effects of learning rate and damping, with training being liable to fail completely or diverge if inappropriate values are used. Furthermore as shown in \cite{granziol2020learning} the VGG is highly unstable if too large a learning rate is used. This allows us to very explicitly test whether amendments provided by theory are helpful in certain contexts, such as training stability, as unstable training very quickly leads to divergence.

\paragraph{Learning Rate Schedule} For all experiments unless specified,  we use the following learning rate schedule for the learning rate at the $t$-th epoch:
\begin{equation}
	\alpha_t = 
	\begin{cases}
		\alpha_0, & \text{if}\ \frac{t}{T} \leq 0.5 \\
		\alpha_0[1 - \frac{(1 - r)(\frac{t}{T} - 0.5)}{0.4}] & \text{if } 0.5 < \frac{t}{T} \leq 0.9 \\
		\alpha_0r, & \text{otherwise}
	\end{cases}
\end{equation}
where $\alpha_0$ is the initial learning rate. $T$ is the total number of epochs budgeted for all CIFAR experiments. We set $r = 0.01$ for all experiments.

\begin{table*}[ht]
	\vspace{-0.1cm}
	\begin{minipage}[b]{0.98\linewidth}\centering
		\vspace{15pt}
		\hspace{-1.5cm}
		\begin{minipage}[b]{0.40\linewidth}\centering
			\centering
			\setlength\tabcolsep{3.5pt} 
			\begin{tabular}{@{}lrr@{}}
				\toprule
				& \multicolumn{2}{c}{$\alpha$} \\
				\cmidrule(lr){2-3}
				\multicolumn{1}{c}{$\delta$} &  \multicolumn{1}{c}{0.0004} &  \multicolumn{1}{c}{0.001}  \\
				\midrule
				$1$e-$7$ &  \textbf{53.1}(62.9) & \textbf{}  \\
				$4$e-$4$ &  \textbf{21.1}(64.5) & \textbf{}  \\
				$1$e-$3$ & \textbf{9.9}(63.5) & \textbf{20.8}(64.4)  \\
				$5$e-$3$ & \textbf{} & \textbf{9.1}(66.2)  \\
				$8$e-$3$ & \textbf{} & \textbf{2.4}(65.8) \\
				\bottomrule
			\end{tabular}
			\vspace{6pt}
		\end{minipage}
		\hspace{0.1cm}
		\begin{minipage}[b]{0.40\linewidth}
			\centering
			\setlength\tabcolsep{3pt} 
			\begin{tabular}{@{}lrrr@{}}
				\toprule
				& \multicolumn{3}{c}{$\alpha$} \\
				\cmidrule(lr){2-4}
				\multicolumn{1}{c}{$\delta$} & \multicolumn{1}{c}{0.1} & \multicolumn{1}{c}{0.001} & \multicolumn{1}{c}{0.0001} \\
				\midrule
				$1$e-$1$  & \textbf{6.7}(65.0) & \textbf{} & \\
				$1$e-$2$  & & \textbf{20.8}(64.8) & \textbf{} \\
				$1$e-$3$  & & \textbf{} &\textbf{48.2}(62.2) \\
				$3$e-$4$  & & \textbf{} &  \textbf{527.2}(60.2) \\
				$1$e-$4$  & & \textbf{} &  \textbf{711.3}(56.0) \\ \bottomrule
			\end{tabular}
			\vspace{6pt}
		\end{minipage}
		\caption{\textbf{Spectral norms and generalisation}. We report the spectral norm $\lambda_{1}$ at the end of training in \textbf{bold}, with corresponding validation accuracy (in parentheses) for learning rate/damping $\alpha,\delta$ using Adam (left) and KFAC (right) to train a VGG-$16$ network on CIFAR-$100$.}
		\label{tab:tableofspectralnorm}
	\end{minipage}
\end{table*}

\subsection{KFAC with VGG-16 on CIFAR-100:} 

By decreasing the global learning rate $\alpha$ whilst keeping the damping-to-learning-rate ratio $\kappa = \frac{\delta}{\alpha}$ constant, we increase the \reflong, $\reff$, which is determined by $\frac{\lambda_{i}}{\kappa \alpha}+1$. As shown in Tab.~\ref{tab:tableofspectralnorm} and in Figure \ref{fig:kfaclrandeps} we observe that as we increase $\reff$ the training performance is effectively unchanged, but generalisation suffers ($35\%\rightarrow37.8\%$). 
Whilst decreasing the damping results in poor training for large learning rates, for very low learning rates the network efficiently trains with a lower damping coefficient. Such regimes further increase $\reff$ and we observe that they generalise more poorly. For $\alpha = 0.0001$ dropping the damping coefficient $\delta$ from $0.0003$ to $0.0001$ drops the generalisation further to $60.2\%$ and then $56\%$ respectively. Similar to logistic regression, for both cases the drop in generalisation is significantly larger than the drop in training accuracy. 
\begin{figure*}[h!]
	\centering
	\begin{subfigure}[b]{0.60\textwidth}
	    \vspace{-0.1cm}
		\includegraphics[width=\textwidth]{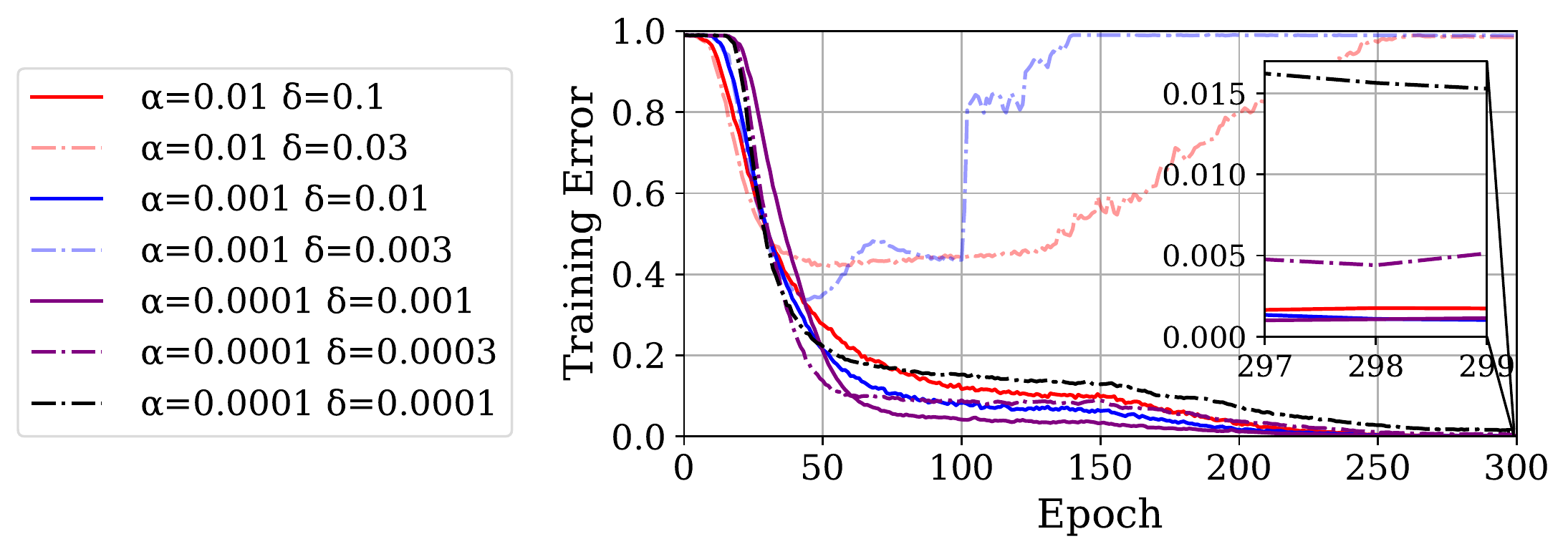}
	\end{subfigure}
	\begin{subfigure}[b]{0.39\textwidth}
		\includegraphics[width=\textwidth]{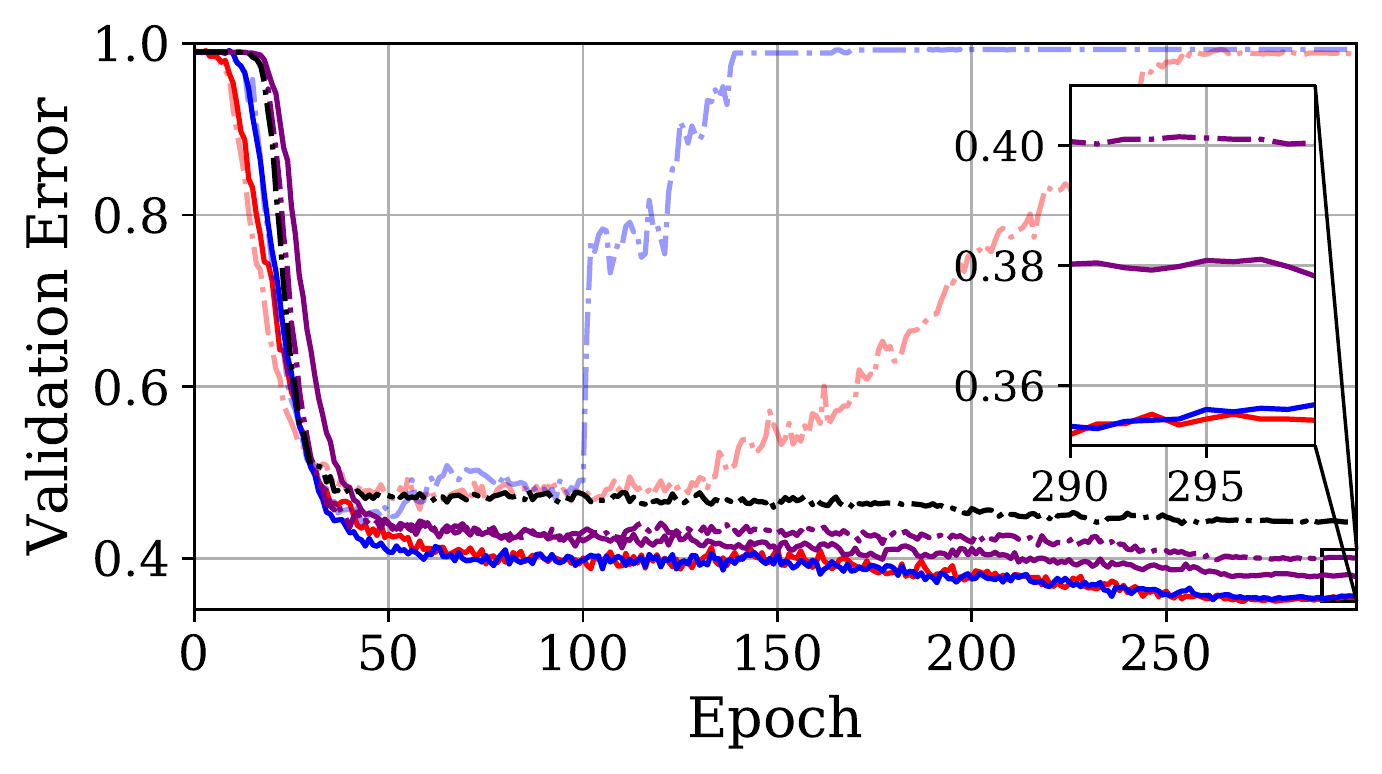}
	\end{subfigure}
	\vspace{-12pt}
	\caption{Training/validation error of the KFAC optimiser for VGG-$16$ on the CIFAR-$100$ dataset with various learning rates $\alpha$ and damping values, $\delta$.}
	\label{fig:kfaclrandeps}
\end{figure*}

\paragraph{Adam with VGG-16 on CIFAR-100:}
\label{subsec:adamexp}
We employ Adam with a variety of learning rate and damping coefficients with results as shown in Tab.~\ref{tab:tableofspectralnorm} and in Figure \ref{fig:adamlrandeps} and compare against a baseline SGD with $\alpha = 0.01$ (corresponding to optimal performance). For the largest learning rate with which Adam trains ($\alpha = 0.0004$) with the standard damping coefficient $\delta = 10^{-8}$, we see that Adam under-performs SGD, but that this gap is reduced by simply increasing the damping coefficient without harming performance. Over-damping decreases the performance.  For larger global learning rates enabled by a significantly larger than default damping parameter, when the damping is set too low, the training is unstable (corresponding to the dotted lines). Nevertheless, many of these curves with poor training out-perform the traditional setting on testing. We find that for larger damping coefficients $\delta = 0.005, 0.0075$ Adam is able to match or even beat the SGD baseline, whilst converging faster. We show that this effect is statistically significant in Tab.~\ref{tab:seeds}. This provides further evidence that for real problems of interest, adaptive methods are not worse than their non-adaptive counterparts as argued by \cite{wilson2017marginal}. We note as shown in Tab.~\ref{tab:tableofspectralnorm}, that whilst increasing $\delta$ always leads to smaller spectral norm, this does not always coincide with better generalisation performance. We extend this experimental setup to include both batch normalisation \cite{ioffe2015batch} and decoupled weight decay \cite{loshchilov2018decoupled}. We use a learning rate of $0.001$ and a decoupled weight decay of $[0,0.25]$. For this experiment  using a larger damping constant slightly assists training and improves generalisation, both with and without weight decay.

\begin{figure*}[t!]
	\centering
	\begin{subfigure}[b]{0.57\textwidth}
		\includegraphics[width=\textwidth]{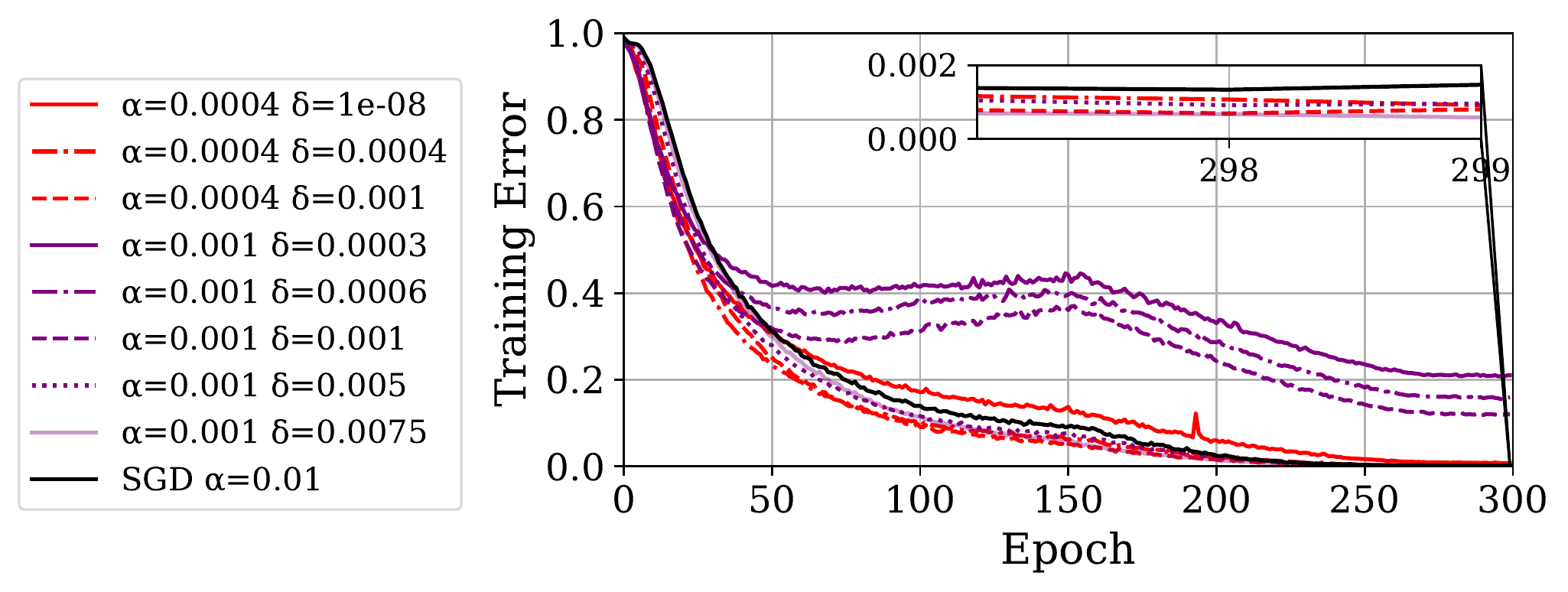}
	\end{subfigure}
	\begin{subfigure}[b]{0.40\textwidth}
		\includegraphics[width=\textwidth]{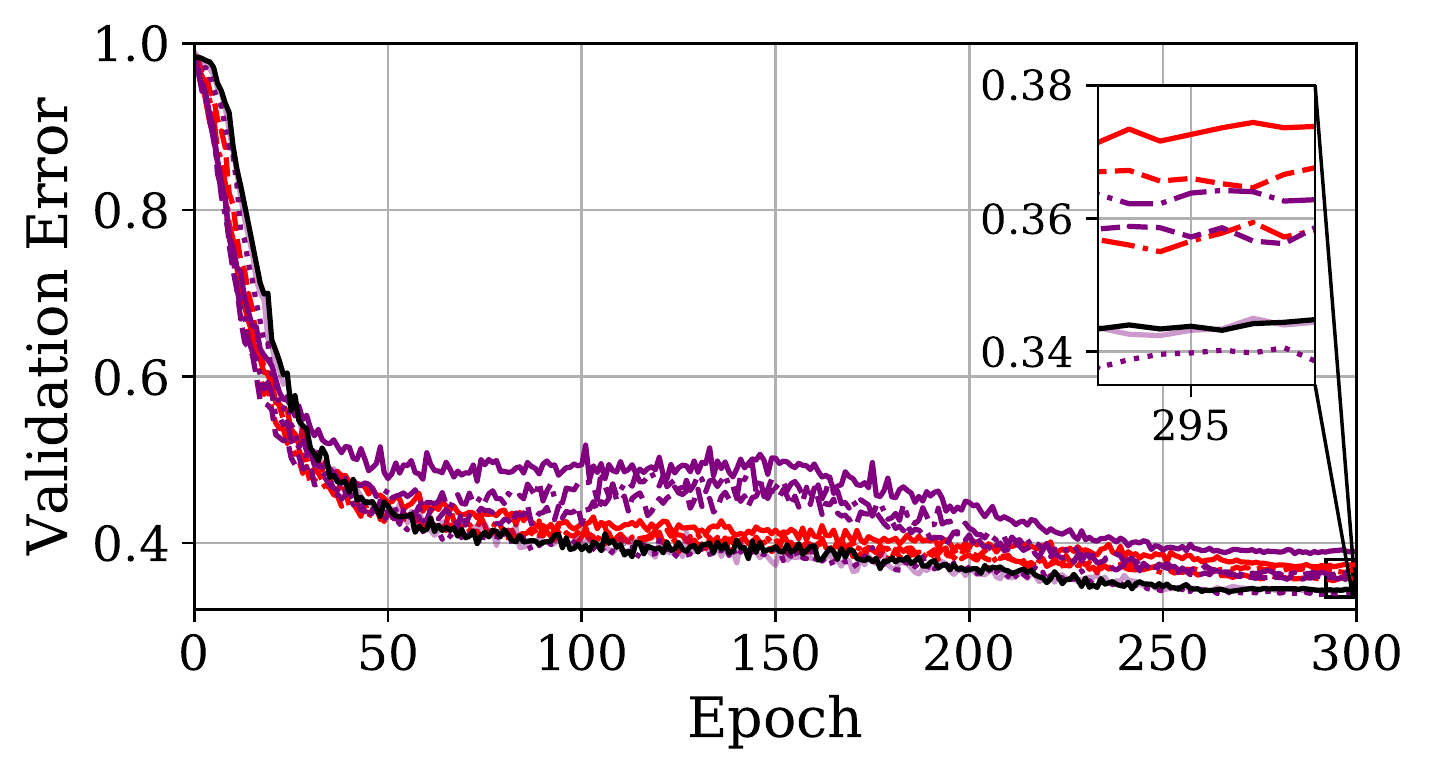}
	\end{subfigure}
	\vspace{-5pt}
	\caption{Training/validation error of the Adam optimiser for VGG-$16$ on the CIFAR-$100$ dataset with various learning rates $\alpha$ and damping values, $\delta$.}
	\label{fig:adamlrandeps}
\end{figure*}

\label{sec:bnexp}
\begin{figure*}[h!]
	\centering
	\begin{subfigure}[b]{0.61\textwidth}
		\includegraphics[width=\textwidth]{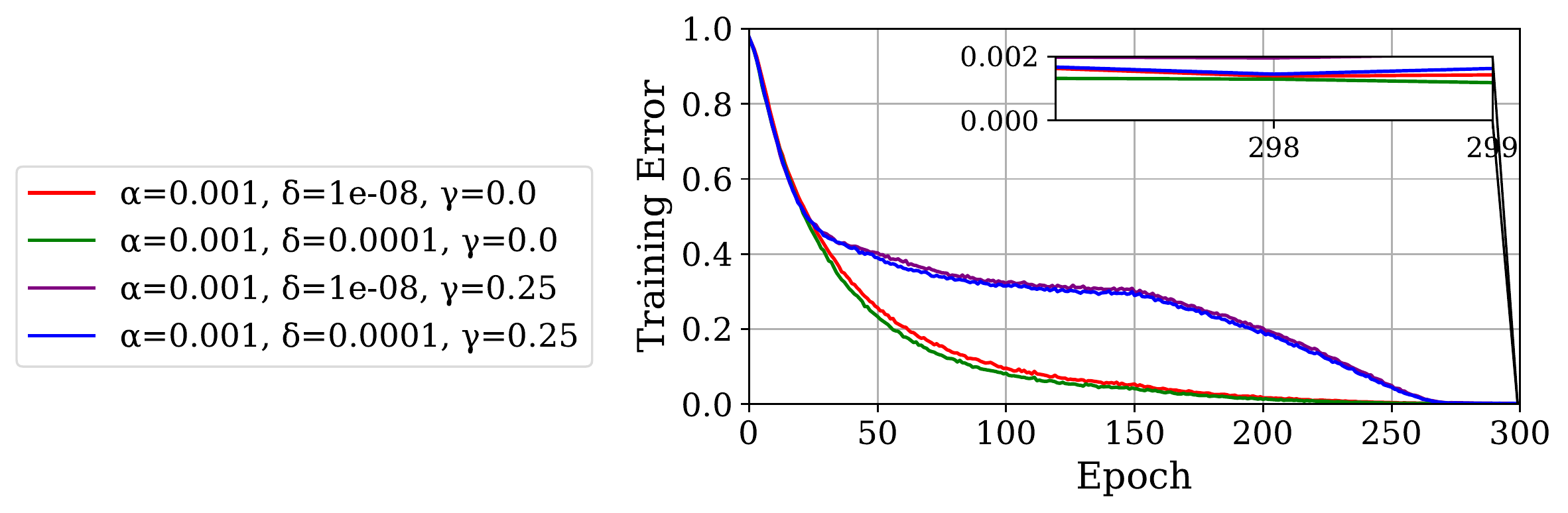}
	\end{subfigure}
	\begin{subfigure}[b]{0.37\textwidth}
		\includegraphics[width=\textwidth]{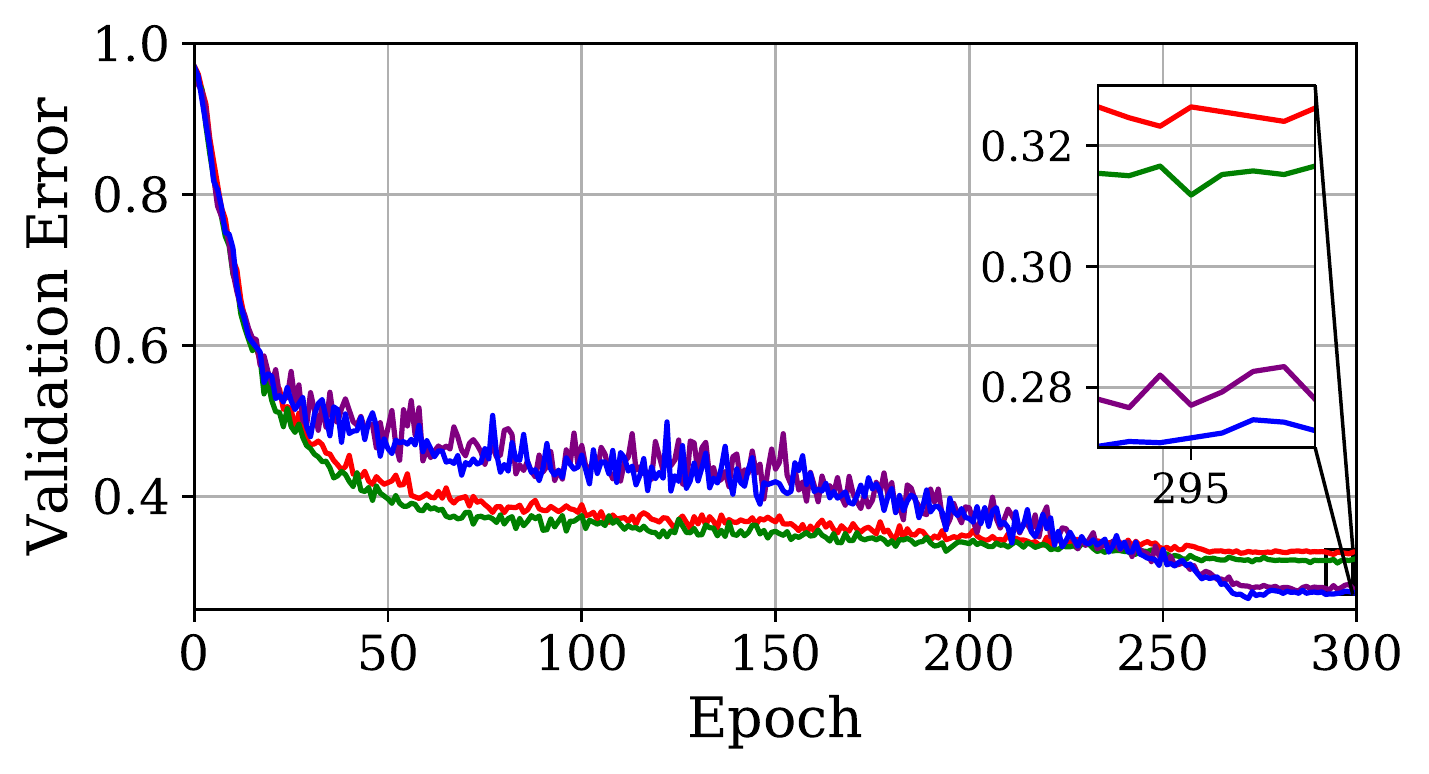}
	\end{subfigure}
	\caption{Training/validation error of the Adam optimiser for VGG-$16$BN using Batch Normalisation and Decoupled Weight Decay on the CIFAR-$100$ dataset with various learning rates $\alpha$ and damping values, $\delta$.}
	\label{fig:adambnlrandeps}
\end{figure*}

\paragraph{ResNet-$50$ ImageNet.}

\begin{figure}
\begin{minipage}[t]{1\textwidth}
		\centering
		\begin{subfigure}[b]{0.49\textwidth}
			\includegraphics[width=\textwidth]{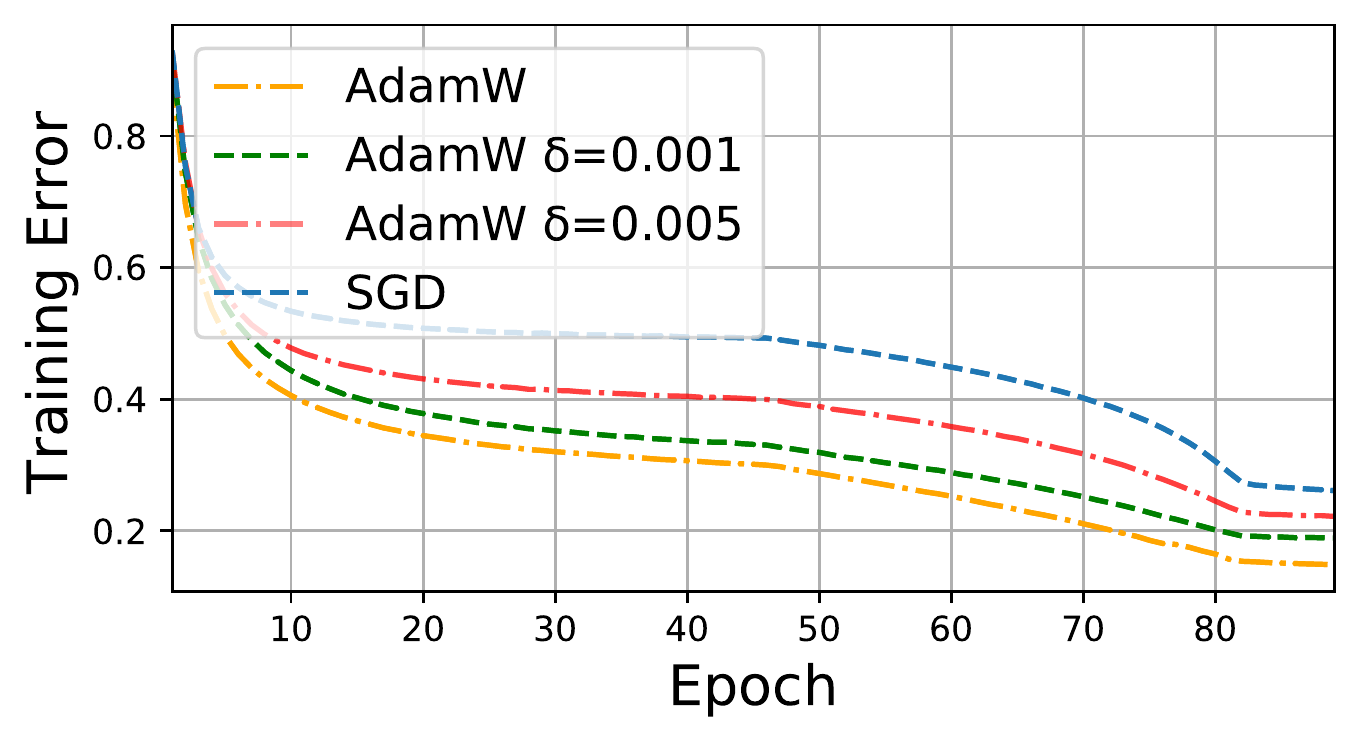}
			\caption{ResNet-$50$ Training Error}
			\label{subfig:r50train}
		\end{subfigure}
		\begin{subfigure}[b]{0.49\textwidth}
			\includegraphics[width=\textwidth]{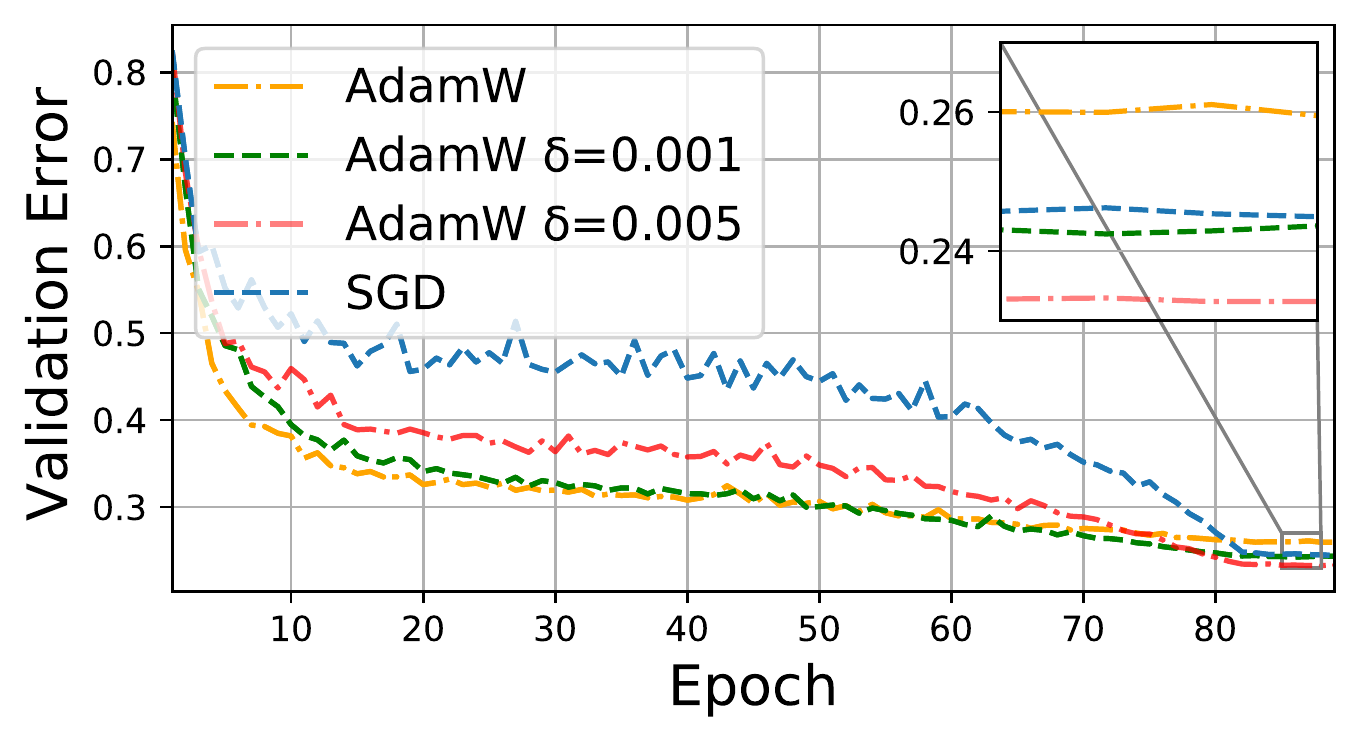}
			\caption{ResNet-$50$ Testing Error}
			\label{subfig:r50test}
		\end{subfigure}
\end{minipage}

\label{fig:res50adamw}

\caption{(a-b) The influence of $\delta$ on the generalisation gap. Train/Val curves for ResNet-$50$ on ImageNet. The generalisation gap is completely closed with an appropriate choice of $\delta$.}
\end{figure}

As shown in Fig.~\ref{subfig:r50train},\ref{subfig:r50test}, these procedures have practical impact on large scale problems. Here we show that under a typical $90$ epoch ImageNet setup~\cite{he2016deep}, with decoupled weight decay $0.01$ for AdamW and $0.0001$ for SGD, that by increasing the numerical stability constant $\delta$ the generalisation performance can match and even surpass that of SGD, which is considered state-of-the-art and beats AdamW without $\delta$ tuning by a significant margin.

\begin{table}[h]
		\begin{tabular}{@{}llllll@{}}
		\toprule
		\textbf{Dataset} & \textbf{Classes}& \textbf{Model Architecture} & \textbf{SGD} & \textbf{Adam-D} & \textbf{Adam} \\ \midrule
		CIFAR-100 & 100 & VGG16  & 65.3 $\pm$ 0.6 & 65.5 $\pm$ 0.7 & 61.9 $\pm$ 0.4 \\
		\midrule
		
		ImageNet & 1000& ResNet50 & 75.7 $\pm$ 0.1 & 76.6 $\pm$ 0.1 & 74.04* \\ \bottomrule
	\end{tabular}
	\vspace{15pt}
	\caption{Statistical Significance. Comparison of test accuracy across CIFAR 100 (5 seeds) and ImageNet (3 seeds). \textbf{Adam-D} denotes Adam with increased damping ($\delta=5e^{-3}$ for CIFAR-100, $\delta=1e^{-4}$ for ImageNet). *Since it is well established that Vanilla Adam does not generalise well for ImageNet, we do not run this experiment for multiple seeds, we simply report a single seed result for completeness. A more complete discussion for Adam and its generalisation in vanilla form can be found in \cite{granziol2020iterate}.}
	\label{tab:seeds}
\end{table}

\section{Optimal adaptive damping from random matrix theory} \label{sec:adaptive}

Recall the scaling applied in the direction of the $i^{\mathrm{th}}$ eigenvector in (\ref{eq:secondorderopt}). We make the following observation
\begin{equation}
	\begin{aligned}
		\label{eq:diegosderivation}
		& \frac{1}{\lambda_{i}+\delta} = \frac{1}{\beta\lambda_{i}+(1-\beta)}\cdot\frac{1}{\kappa} \\
	\end{aligned}
\end{equation}
where $\kappa = \beta^{-1}, \beta = (1+\delta)^{-1}$. Hence, using a damping $\delta$ is formally equivalent to applying linear shrinkage with factor $\beta=(1+\delta)^{-1}$ to the estimated Hessian and using a learning rate of $\alpha\beta$.
\nick{Shrinkage estimators are widely used in finance and data science, with linear shrinkage being a common simple method applied to improve covariance matrix estimation \cite{ledoit2004well}. The practice of shrinking the eigenvalues while leaving the eigenvectors unchanged is well-established in the fields of sparse component analysis and finance \cite{bun2017cleaning}. \cite{bun2016rotational} even show that under the free multiplicative noise model, the linear shrinkage estimator,
$
\Tilde{\mH} = \beta \mH + (1-\beta)\mI = \argmin_{\mH^*}||\mH^*-\mH_{\mathrm{true}}||_2
$, 
gives the minimum error between the estimator and the true Hessian, and an explicit expression for the optimal $\beta$ is found depending only on the dimensionality of the model and the noise variance. In our optimisation context, the linear shrinkage estimator is unlikely to be optimal, however it has the great advantage of being simple to integrate into existing adaptive optimisers and it acts intuitively to reduce the movement of the optimiser in pure-noise directions. 
Our interpretation reveals that the damping parameter should not be viewed as a mere numerical convenience to mollify the effect of very small estimate eigenvalues, but rather that an optimal $\delta$ should be expected, representing the best linear approximation to the true Hessian and an optimal balancing of variance (the empirical Hessian) and bias (the identity matrix). This optimal choice of $\delta$ will produce an optimiser that more accurately descends the directions of the true loss.}

The linear shrinkage interpretation given by (\ref{eq:diegosderivation}) is an elementary algebraic relation but does not by itself establish any meaningful link between damping of adaptive optimisers and linear shrinkage estimators. To that end, we return to the random matrix model (\ref{eq:additive_noise}) for the estimated Hessian:
Let us write the Hessian as \begin{align*}
    \mH_{\text{batch}} = \mH_{\text{true}} + \mX
\end{align*}
where $\mX$ is a random matrix with $\mathbb{E}\mX = 0$. Note that this model is entirely general, we have simply defined $\mX = \mH_{\text{batch}}  - \mathbb{E}\mH_{\text{batch}} $ and $\mathbb{E}\mH_{\text{batch}}  = \mH_{\text{true}}$. We then seek a linear shrinkage estimator $\tilde{\mH}(\beta) = \beta \mH_{\text{batch}} + (1-\beta)\mI$ such that $E(\beta)= P^{-1}\Tr (\tilde{\mH} - \mH_{\text{true}})^2$ is minimised. Note that this is the same objective optimised by \cite{bun2016rotational} to obtain optimal estimators for various models. In this context, we are not finding the optimal estimator for $\mH_{\text{true}}$ but rather the optimal \emph{linear shrinkage} estimator. We have \begin{align*}
    E(\beta) = \frac{1}{P}\Tr\left[(\beta-1) \mH_{\text{true}} + \beta\mX + (1-\beta)\mI\right]^2 \equiv \frac{1}{P}\Tr \left[  (\beta - 1)\mH_{\text{true}}+ \mY_{\beta}\right]^2
\end{align*}
where $\mY_{\beta} = \beta\mX + (1-\beta)\mI$.

A natural assumption in the case of deep learning is that $\mH_{\text{true}}$ is low-rank, i.e. for $P\rightarrow\infty$ either $\text{rank}(\mH_{\text{true}}) = r$ is fixed or  $\text{rank}(\mH_{\text{true}}) = o(P)$. Empirical evidence for this assumption is found in \cite{granziol2020learning,sagun2016eigenvalues,sagun2017empirical,papyan2018full,ghorbani2019investigation}. In this case the bulk of the spectrum of $\mY_{\beta}$ is the same as that of $(\beta-1) \mH_{\text{true}} + \mY_{\beta}$ \cite{benaych2011eigenvalues,capitaine2016spectrum,belinschi2017outliers}. We will also assume that $\mX$ admits a deterministic limiting spectral measure $\mu_X$ such that \begin{align}
    \frac{1}{P}\sum_{j=1}^P \delta_{\lambda(X)_i} \rightarrow \mu
\end{align}
weakly almost surely. Say $\omega_X(x) dx = d\mu(x)$. Then $\mY_{\beta}$ has limiting spectral density \begin{align*}
    \omega_Y(y) = \beta^{-1}\omega_X(\beta^{-1}(y - 1 + \beta)).
\end{align*}
Then for large $P$ 
\begin{align*}
    E(\beta) \approx \beta^{-1}\int y^2 \omega_X(\beta^{-1}(y - 1 + \beta)) ~dy &=  \int (\beta x + 1 - \beta)^2 \omega_X(x) ~ dx \\
    &=  \beta^2 \mu_X(x^2) + (1-\beta)^2
\end{align*}
as the centred assumption on $\mX$ means that $\int x\omega_X(x) ~dx = 0$. $\mu_X(x^2)$ is shorthand for $\int x^2\omega_X(x)~ dx$. $E(\beta)$ is thus minimised to leading order at $\beta = (1 + \mu_X(x^2))^{-1}$. Recalling that $\beta^{-1} = (1+\delta)^{-1}$, this yields $\delta = \mu_X(x^2)$ i.e. the optimal level of damping at large finite $P$ is approximately
\begin{align}\label{eq:finaloptdamp}
\delta=P^{-1}\Tr \mX^2.
\end{align}
Note that the value (\ref{eq:finaloptdamp}) is a very natural measure of the Hessian noise variance. Therefore if the random matrix model described above is appropriate and the linear shrinkage interpretation (\ref{eq:diegosderivation}) is meaningful we should expect it to result in close to optimal performance of a given adaptive optimiser. The purpose of adaptive optimisers is to accelerate training, in part by allowing for larger stable learning rates. As discussed throughout this paper, such optimisation speed often comes at the cost of degraded generalisation. In this context, `optimal performance' of adaptive optimisers should be taken to mean fast training and good generalisation. 
As we have discussed above, very large values of $\delta$ recover simple non-adaptive SGD, so using (\ref{eq:finaloptdamp}) we should be able to obtain generalisation performance at least as good as SGD and faster optimisation than any choice of $\delta$ including the default very small values often used and the larger values considered in Section \ref{sec:nnexperiments}. 

The value of (\ref{eq:finaloptdamp}) can be easily learned by estimating the variance of the Hessian. The Hessian itself cannot be computed exactly, as it is far too large for $P \geq O(10^7)$, however one can compute $\mH \vv$ (and hence $\mH^2 \vv$) for any vector $\vv$, using $\nabla^2 L \vv = \nabla (\vv^T\nabla L)$. The full approach is given in Algorithm \ref{alg:hessvar}.
\begin{algorithm}[H]
	\begin{algorithmic}[1]
		\STATE {\bfseries Input:} Sample Hessians $\mH_{i}\in \mathbb{R}^{P\times P}$, $1\leq i < N$
		\STATE {\bfseries Output:} Hessian Variance $ \sigma^{2}$
		\STATE $\vv \in \mathbb{R}^{1\times P} \sim \mathcal{N}(\boldsymbol{0}, \mI)$
		\STATE Initialise $\sigma^{2}=0, i = 0$, $\vv \leftarrow \vv/||\vv||$
		\FOR{$i < N$}
		\STATE $\sigma^{2} \leftarrow \sigma^{2} + \vv^{T}\mH_{i}^{2}\vv$
		\STATE $i \leftarrow i + 1$
		\ENDFOR
		\STATE $\sigma^{2} \leftarrow \sigma^{2} - [\vv^{T}(1/N\sum_{j=1}^{N}\mH_{j})\vv]^{2}$
	\end{algorithmic}
	\caption{Algorithm to estimate the Hessian variance}
	\label{alg:hessvar}
\end{algorithm}

\subsection{Experimental Design and Implementation Details}

In order to test our hypothesis for the derived optimal $\delta$ (\ref{eq:finaloptdamp}), we run the classical VGG network \cite{simonyan2014very} with $16$ layers on the CIFAR-$100$ dataset, without weight decay or batch normalisation. This gives us maximal sensitivity to the choice of learning rate and appropriate damping. 

\medskip
Now in practice the damping coefficient is typically grid searched over several runs \cite{dauphin2014identifying} or there are heuristics such as the Levenberg–Marquardt to adapt the damping coefficient \cite{martens2015optimizing}, which however we find does not give stable training for the VGG. We hence compare against a fixed set damping value $\delta$ and a learned damping value as given by our equation (\ref{eq:finaloptdamp}). We find that the variance of the Hessian (\ref{eq:finaloptdamp}) at a random point in weight space (such as at initialisation) or once network divergence has occurred is zero, hence the initial starting value cannot be learned as, with a damping of near zero, the network entirely fails to train (no change in training loss from random). This is to be expected, as in this case the local quadratic approximation to the loss inherent in adaptive methods breaks down. Hence we initialise the learning algorithm with some starting value $\delta^{*}$, which is then updated every $100$ training iterations using equation (\ref{eq:finaloptdamp}). Strictly speaking we should update every iteration, but the value of $100$ is chosen arbitrarily as a computational efficiency. Since we are using the variance of the Hessian, which is expensive to compute compared to a simple gradient calculation, we do not want to compute this quantity too often if it can be helped. We run our experiments on a logarithmic grid search in near factors of $3$. So learning rates and damping rates, either flat or learned are on the grid of $0.0001,0.0003,0.001...$.

We find under this setup that the time taken per epoch against the flat damping schedule is only doubled. We get identical results for using a damping gap of $10$ and so do not consider this to be a very relevant hyper-parameter. We further calculate the variance of the Hessian over a sub-sample of $10000$ examples and do not calculate the variance sample by sample, but over batches of $128$ to speed up the implementation. Under the assumption that the data is drawn i.i.d from the dataset the variance is simply reduced by a factor $(\frac{1}{B}-\frac{1}{N}) \approx \frac{1}{B}$ for a small batch size. We do not consider the impact of using only a sub-sample of the data for estimation, but we expect similar results to hold compared to the enitre dataset as long as the sub-sample size $S\gg B$. This should allow such a method to be used even for very large datasets, such as ImageNet (with $1$-million images), for which a pass of the entire dataset is extremely costly. In theory the sub-sample size and mini-batch size for Hessian variance estimation could be two hyper-parameters which are tuned by considering the effect of reduction on training set or validation set loss metrics with the trade off for computational cost. We do not conduct such analysis here.

We also incorporate an exponential moving average into the learned damping with a co-efficient of $0.7$\footnote{This value is not tuned and in fact from our plots it may be advisable to consider higher values for greater stability} to increase the stability of the learned damping.

\subsection{Experiment on CIFAR-100 using KFAC to validate the optimal linear shrinkage}

\medskip
For large damping values $\delta$ we simply revert to SGD with learning rate $\alpha/\delta$, so we follow the typical practice of second order methods and use a small learning rate and correspondingly small damping coefficient.  However as shown in Figure \ref{fig:adamsgd} the generalisation and optimisation are heavily dependent on the global learning rate, with larger learning rates often optimising less well but generalising better and vice versa for smaller learning rates. We hence investigate the impact of our damping learner on learning rates one order of magnitude apart. Where in the very low learning rate regime, we show that our method achieves singificantly improved training stability with low starting damping and fast convergence and for the large learning rate regime that we even exceed the SGD validation set result.

\paragraph{Training KFAC with Auto-Damping:}
We show the results for a global learning rate of $0.0001$ in Figure \ref{subfig:damperr}. We see that for the flat damping methods with low values of damping, that training becomes unstable and diverges, despite an initially fast start. Higher damped methods converge, but slowly. In stark contrast, our adaptive damping method is relatively insensitive to their chosen initial values. We show here $\delta^{*} = \alpha,3\alpha,10\alpha$ and all converge and moreover significantly faster than all flat damping methods. The smaller initial damping coefficients $\delta = \alpha,3\alpha$ converge faster than the larger and, interestingly, follow very similar damping trajectories throughout until the very end of training, as shown in Figure \ref{subfig:damping}.

\begin{figure}[!h]
	\begin{subfigure}[b]{0.66\textwidth}
		\includegraphics[width=\textwidth]{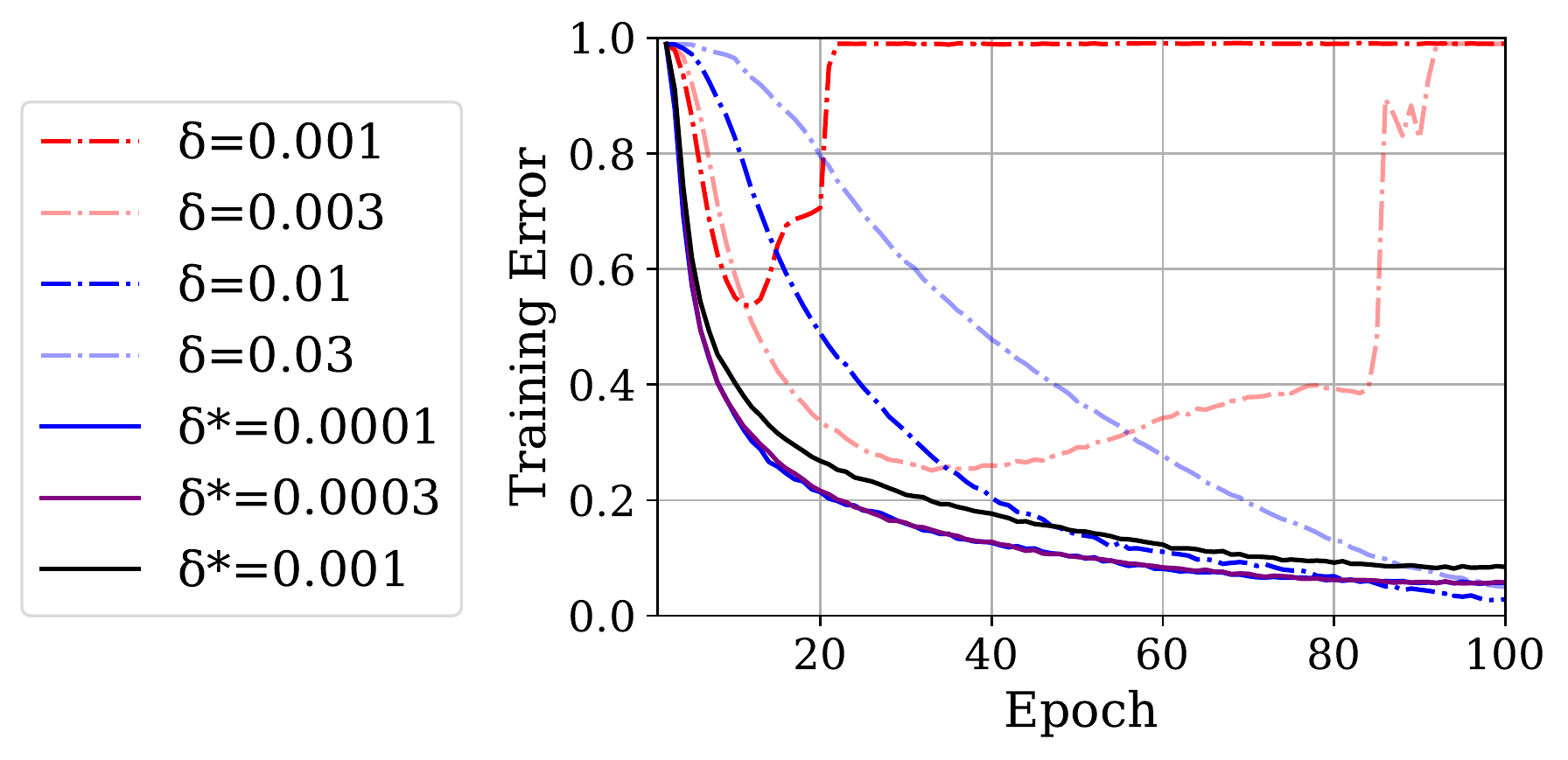}
		\caption{Training Error as a function of Epoch}
		\label{subfig:damperr}
	\end{subfigure}
	\begin{subfigure}[b]{0.33\textwidth}
		\includegraphics[width=\textwidth]{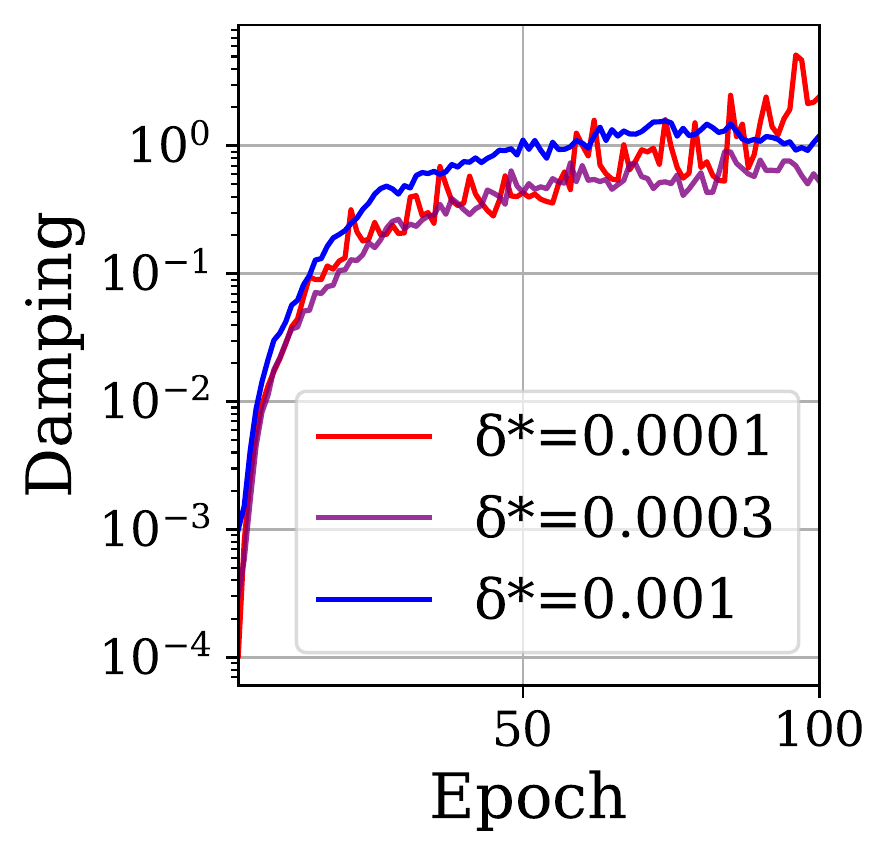}
		\caption{Damping per Epoch}
		\label{subfig:damping}
	\end{subfigure}
	\caption{VGG-$16$ on CIFAR-$100$ dataset using the KFAC optimiser with $\gamma=0$ (no weight decay) for a learning rate of $\alpha=0.0001$, batch size $B=128$ and damping set by $\delta$. For adaptive damping methods the damping is given an initial floor value of $\delta^{*}$ and is then updated using the variance of the Hessian every $100$ steps.}
\end{figure}

\paragraph{Getting Great Generalisation with KFAC and Auto-Damping:}
We similarly train KFAC on the VGG-$16$ with a larger learning rate of $0.001$, in order to achieve better generalisation. Here we see in Figure \ref{subfig:trainkfac} that relatively low values of flat damping such as $0.01$ and  $0.03$ very quickly diverge, wheras a large value of $0.1$ converges slowly to a reasonable test error. The corresponding learned damping curves of $0.01$ and $0.03$ however converge quickly and the $0.03$ initialised damping curve even beats the generalisation performance of the large flat damped version and the test result of SGD on $3$x as many training epochs.
\begin{figure}[!h]
	\begin{subfigure}[b]{0.58\textwidth}
		\includegraphics[width=\textwidth]{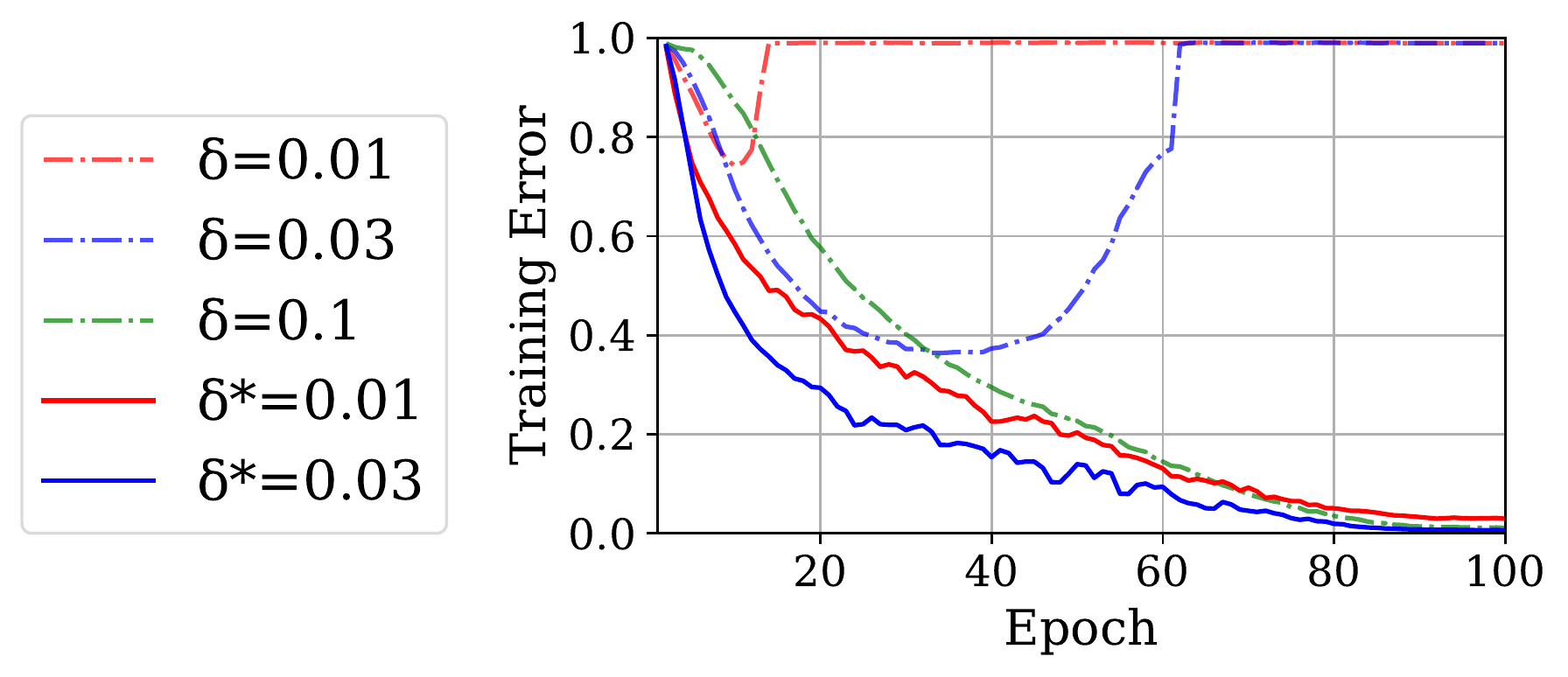}
		\caption{Training Error}
		\label{subfig:trainkfac}
	\end{subfigure}
	\begin{subfigure}[b]{0.41\textwidth}
		\includegraphics[width=\textwidth]{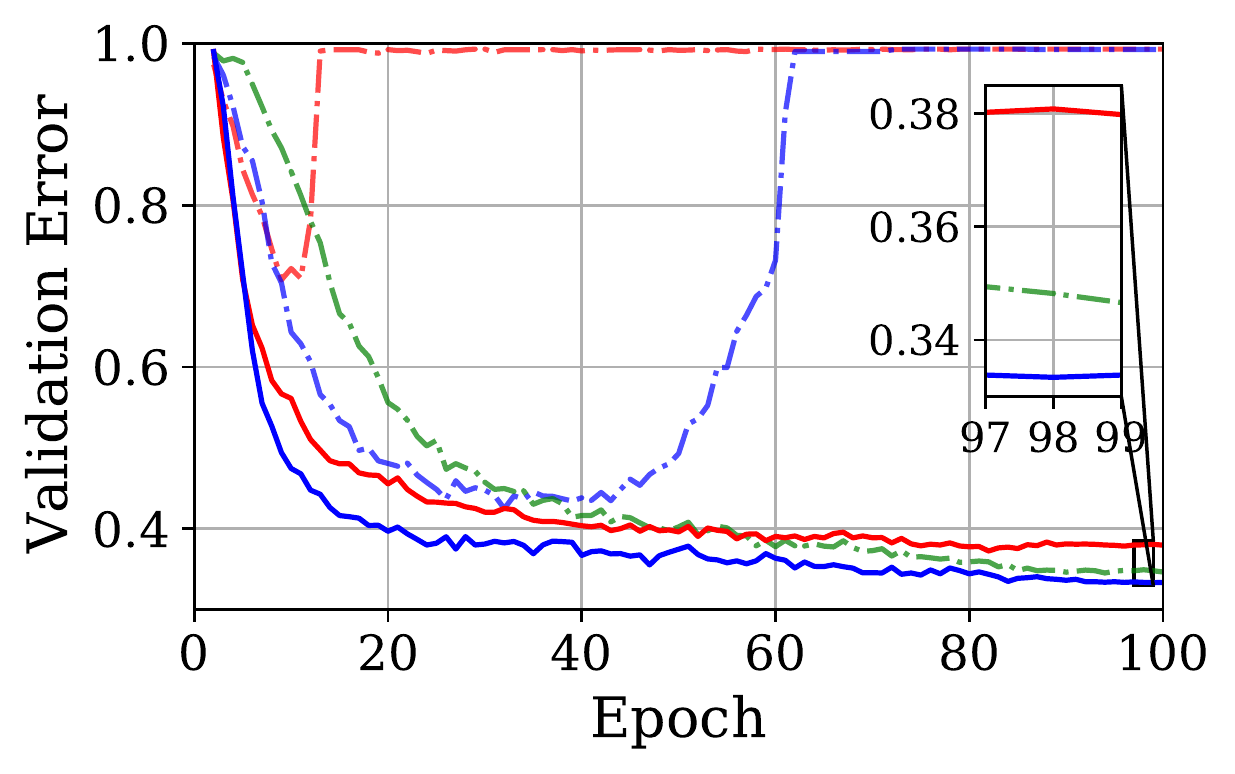}
		\caption{Val Error}
		\label{subfig:valkfac}
	\end{subfigure}
	\caption{VGG-$16$ on CIFAR-$100$ dataset using the KFAC optimiser with $\gamma=0$ (no weight decay) for a learning rate of $\alpha=0.001$, batch size $B=128$ and damping set by $\delta$. For adaptive damping methods the damping is given an initial floor value of $\delta^{*}$ and is then updated using the variance of the Hessian every $100$ steps.}
\end{figure}
\paragraph{A further look at the value of adaptive damping}
\begin{figure}[!h]
	\begin{subfigure}[b]{0.46\textwidth}
		\includegraphics[width=\textwidth]{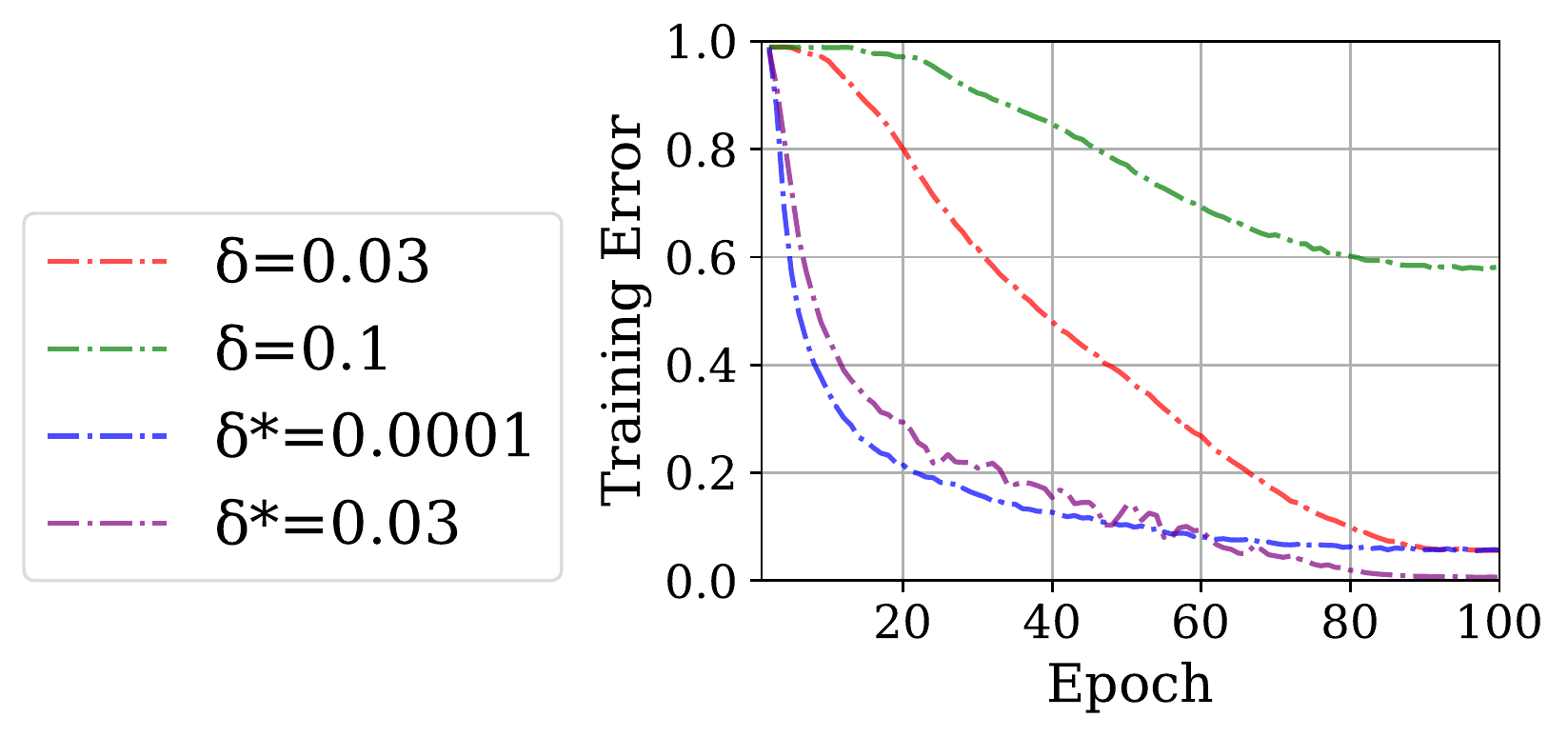}
		\caption{Training Error}
		\label{subfig:trainkfacclean}
	\end{subfigure}
	\begin{subfigure}[b]{0.29\textwidth}
		\includegraphics[width=\textwidth]{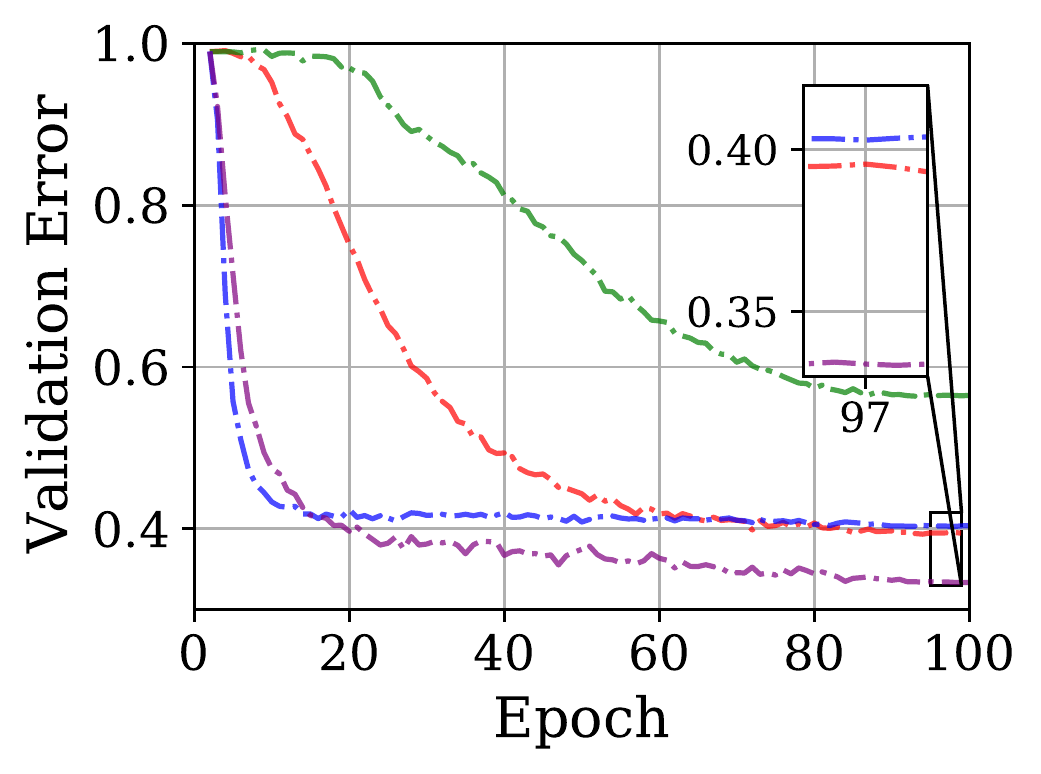}
		\caption{Val Error}
		\label{subfig:valkfacclean}
	\end{subfigure}
	\begin{subfigure}[b]{0.235\textwidth}
		\includegraphics[width=\textwidth]{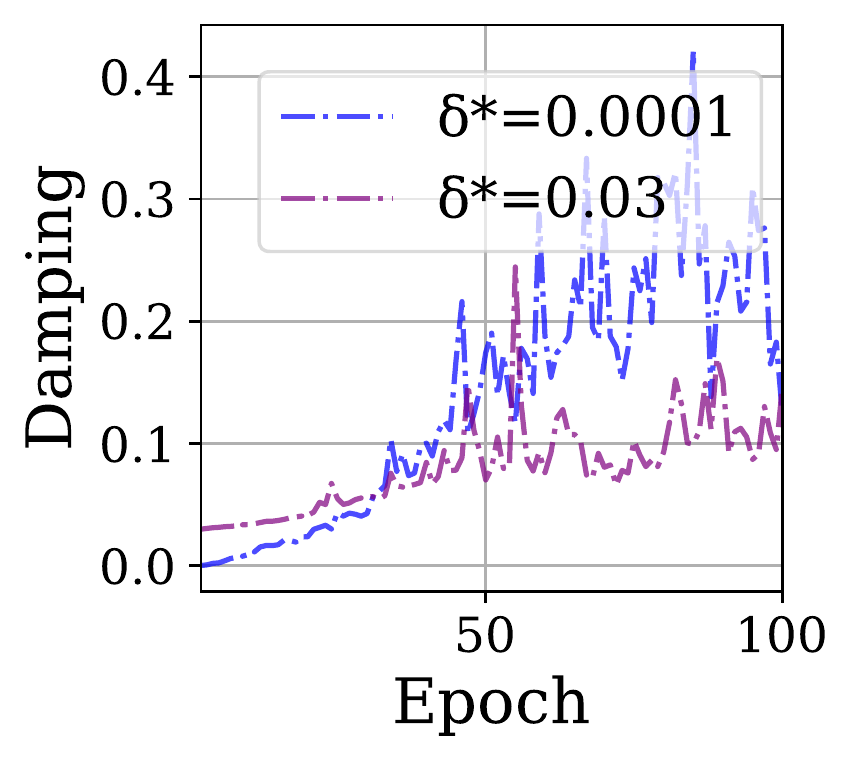}
		\caption{Damping}
		\label{subfig:dampkfac}
	\end{subfigure}
	\caption{VGG-$16$ on CIFAR-$100$ dataset using the KFAC optimiser with $\gamma=0$ (no weight decay) for a learning rate of $\alpha=0.0001$, batch size $B=128$ and damping set by $\delta$. For adaptive damping methods the damping is given an initial floor value of $\delta^{*}$ and is then updated using the variance of the Hessian every $100$ steps.}
	\label{fig:kfacclean}
\end{figure}
To elucidate the impact and workings of the adaptive damping further, we consider a select set of curves the learning rate of $\alpha = 0.0001$, shown in Figure \ref{fig:kfacclean}. here we see that starting with an initial damping of $\delta=\alpha$, the adaptive method reaches a comparable generalisation score to the flat damping of $\delta=0.03$ but at a much faster convergence rate. The initial damping of $\delta=0.03$ converges not quite as quickly but trains and generalises better than its lower starting damping counterpart. Note from Figure \ref{subfig:dampkfac} that even though the damping of this curve reaches $\approx 0.1$ that starting with a flat damping of $0.1$ never achieves a comparable generalisation (or even trains well). This implies as expected that it is important to adjust damping during training.

\subsection{Adam with Auto-Damping}
Given that Adam does not employ an obvious curvature matrix, it is curious to consider whether our learned damping estimator can be of practical value for this optimiser. As discussed in the previous section, Adam's implied curvature can be considered a diagonal approximation to the square root of the gradient covariance. The covariance of the gradients has been investigated to have similarities to the Hessian \cite{jastrzebski2020the} and we further show in Section \ref{sec:quality} experimentally that adaptive methods do learn information about the sharpest eigenvalue/eigenvector pairs. However the nature of the square root, derived from the regret bound in \cite{duchi2011adaptive} presents an interesting dillemna. In the case of very very small eigenvalues of $\mB$, the square root actually reduces their impact on the optimisation trajectory, hence it is very plausible that the learned damping could be too harsh (as it is expected to work optimially for the eigenvalues of $\mH$ and not $\sqrt{\mH})$.  This is actually exactly what we see in Figure \ref{fig:adamautodamp}.
\begin{figure}[h!]
	\begin{subfigure}[b]{0.62\textwidth}
		\includegraphics[width=\textwidth]{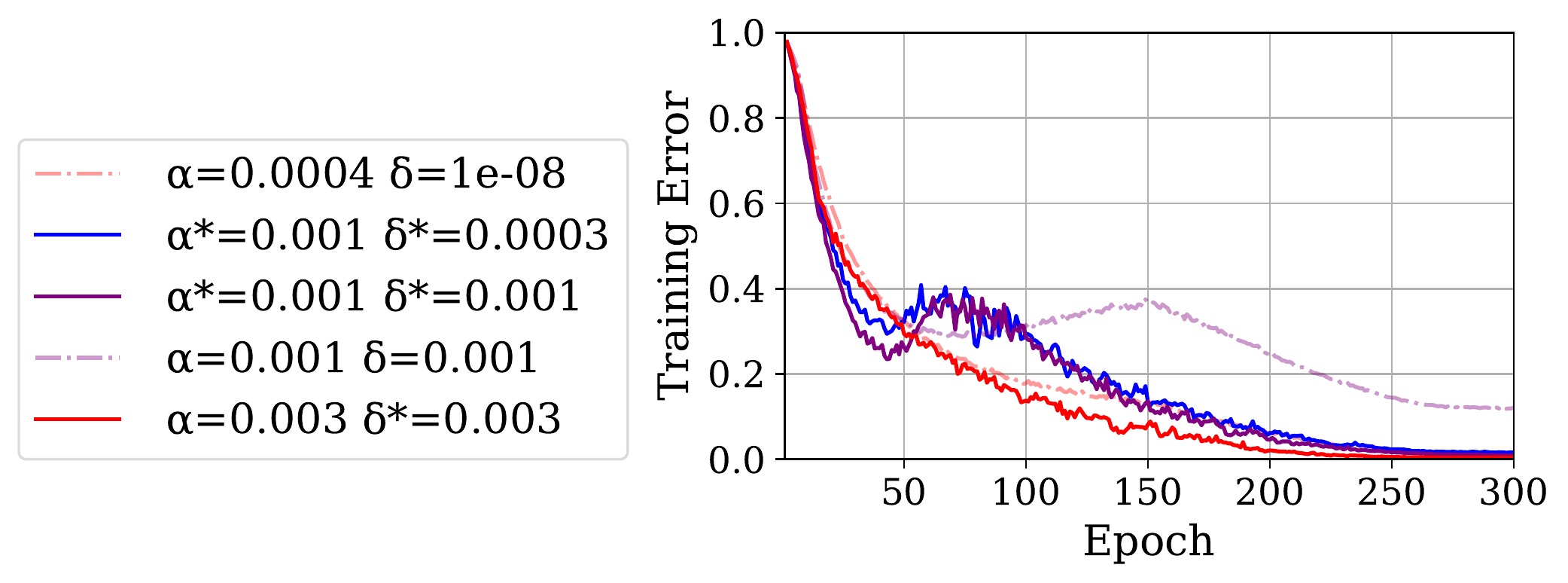}
		\caption{Training Error}
		\label{subfig:trainadam}
	\end{subfigure}
	\begin{subfigure}[b]{0.37\textwidth}
		\includegraphics[width=\textwidth]{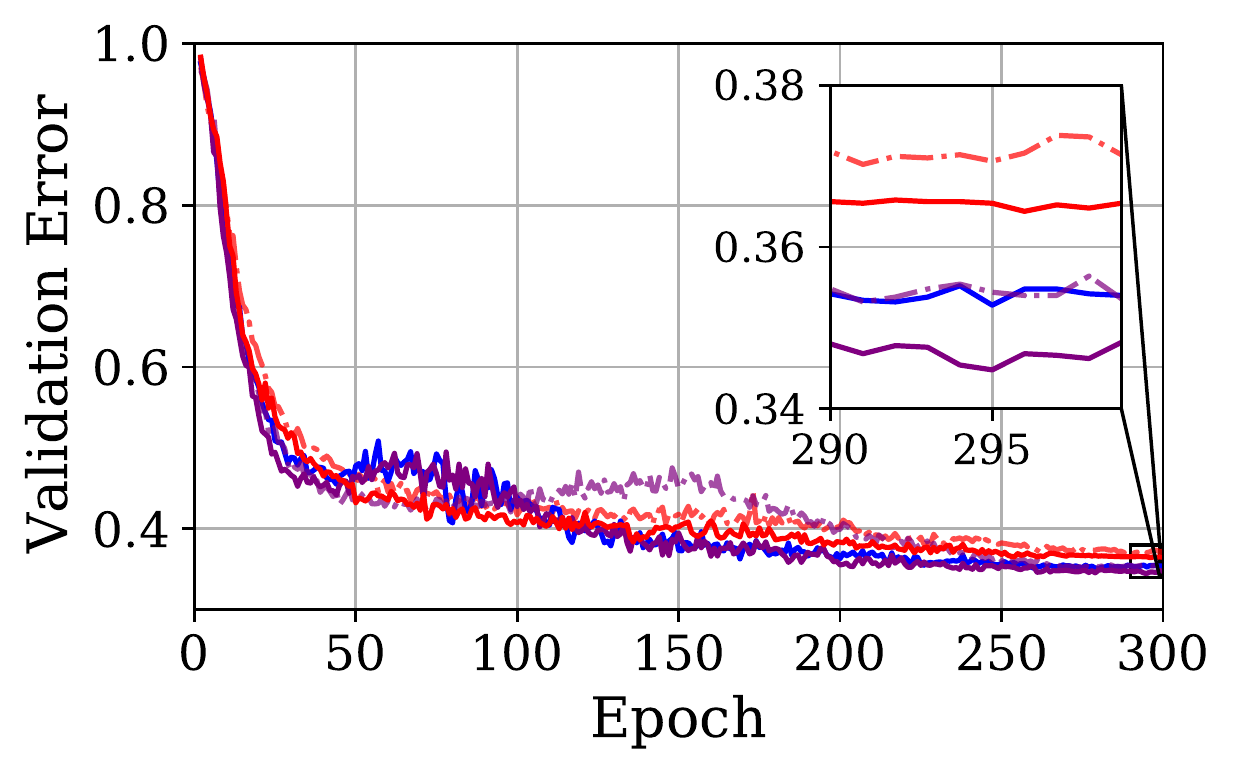}
		\caption{Val Error}
		\label{subfig:valadam}
	\end{subfigure}
	\caption{VGG-$16$ on CIFAR-$100$ dataset using the Adam optimiser with $\gamma=0$ (no weight decay) for a learning rate of $\alpha$, batch size $B=128$ and damping set by $\delta$. For adaptive damping methods the damping is given an initial floor value of $\delta^{*}$ and is then updated using the variance of the Hessian every $100$ steps. $\alpha^{*}$ refers to the use of an alternative ramp up schedule where the base learning rate is increased by a factor of $5$ at the start of training before being decreased.}
	\label{fig:adamautodamp}
\end{figure}
Whilst an increase in learning rate and damping, along with auto-damping improves both the convergence and validation result over the standard baseline (where the damping is kept at the default value and maximal learning rate is found which stably trains) the improvements are small and do not make up the gap with SGD. More specifically they are not better than just using a larger learning rate in combination with a larger flat damping, defeating the purpose of learning the damping factor online. 

To alleviate the effect of overly harsh damping, we consider an alternate learning rate schedule where the base learning rate is increased by a factor of $5$ early in training and then subsequently decreased. The constant $5$ is not tuned but simply a place-holder to consider a more aggressive learning rate schedule to counter-act the effect of the damping learner. These curves are marked with $\alpha^{*}$ in Figure \ref{fig:adamautodamp}. 

\paragraph{Warm up Learning Rate Schedule} For all experiments unless specified,  we use the following learning rate schedule for the learning rate at the $t$-th epoch:
\begin{equation}
	\alpha_t = 
	\begin{cases}
		\alpha_0, & \text{if}\ \frac{t}{T} \leq 0.1 \\
		\alpha_0[1+\frac{(\kappa-1)(\frac{t}{T}-0.1)}{0.2} , & \text{if}\ \frac{t}{T} \leq 0.3 \\
		\alpha_0[\kappa - \frac{(\kappa - r)(\frac{t}{T} - 0.3)}{0.6}] & \text{if } 0.3 < \frac{t}{T} \leq 0.9 \\
		\alpha_0r, & \text{otherwise}
	\end{cases}
\end{equation}
where $\alpha_0$ is the initial learning rate. $T$ is the total number of epochs budgeted for all CIFAR experiments. We set $r = 0.01$ and $\kappa = 5$.

While this introduces some slight training instability early in training, which could potentially be managed by altering the schedule, we find that such a schedule boosts the validation performance, particularly so for auto-damped methods, as shown by the blue curve in Figure \ref{subfig:valadam}, which surpasses the generalisation of SGD (shown in Figure \ref{fig:adamsgd}).

To more clearly expose the combined impact of adaptive damping and this alternative learning schedule we consider the variations in Figure \ref{fig:adamautodampclean} for a learning rate and damping both equal to $0.0001$. Here we see that the aggressive learning rate schedule with flat damping diverges, whereas the autodamping stabilises training, allowing for convergence to a solution with excellent generalisation. We see here in Figure \ref{subfig:closeadamdamp} that the damping coefficient reacts to this large learning rate increase by increasing its rate of damping early, stabilising training. We also show for reference that the typical linear decay schedule, with a larger learning rate and initial damping does not supersede the validation result of smaller learning rate and flat damping counter-part (it does however train better). This demonstrates the necessity of an alternative learning rate schedule to bring out the value of the adaptive damping. We remark however that optimal results in deep learning almost always require some degree of hand-crafted tuning of the learning rate. Our adaptive damping method is not proposed as a panacea, but just an optimal method of setting the damping coefficient. Since changing the damping coefficient effectively changes to geometry of the loss surface, it is entirely reasonable that the learning rate may have to be tweaked to give best results.

\begin{figure}[h!]
	\begin{subfigure}[b]{0.48\textwidth}
		\includegraphics[width=\textwidth]{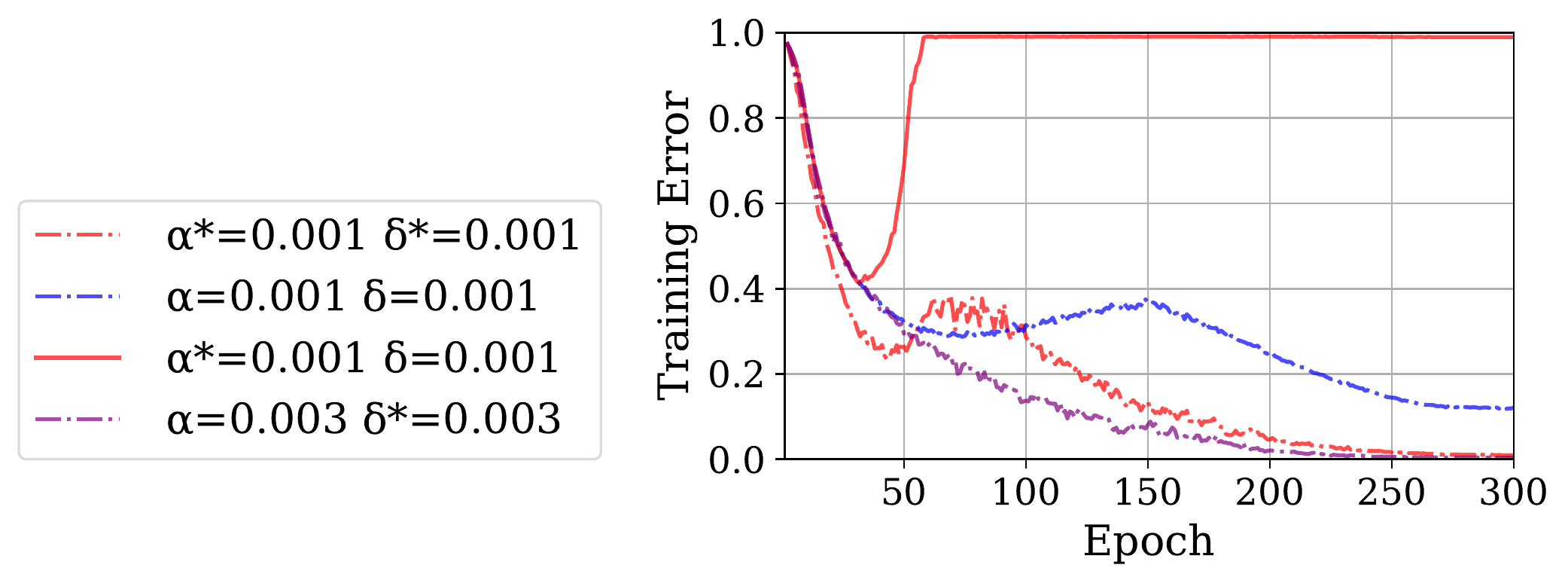}
		\caption{Training Error}
		\label{subfig:closetrainadam}
	\end{subfigure}
	\begin{subfigure}[b]{0.29\textwidth}
		\includegraphics[width=\textwidth]{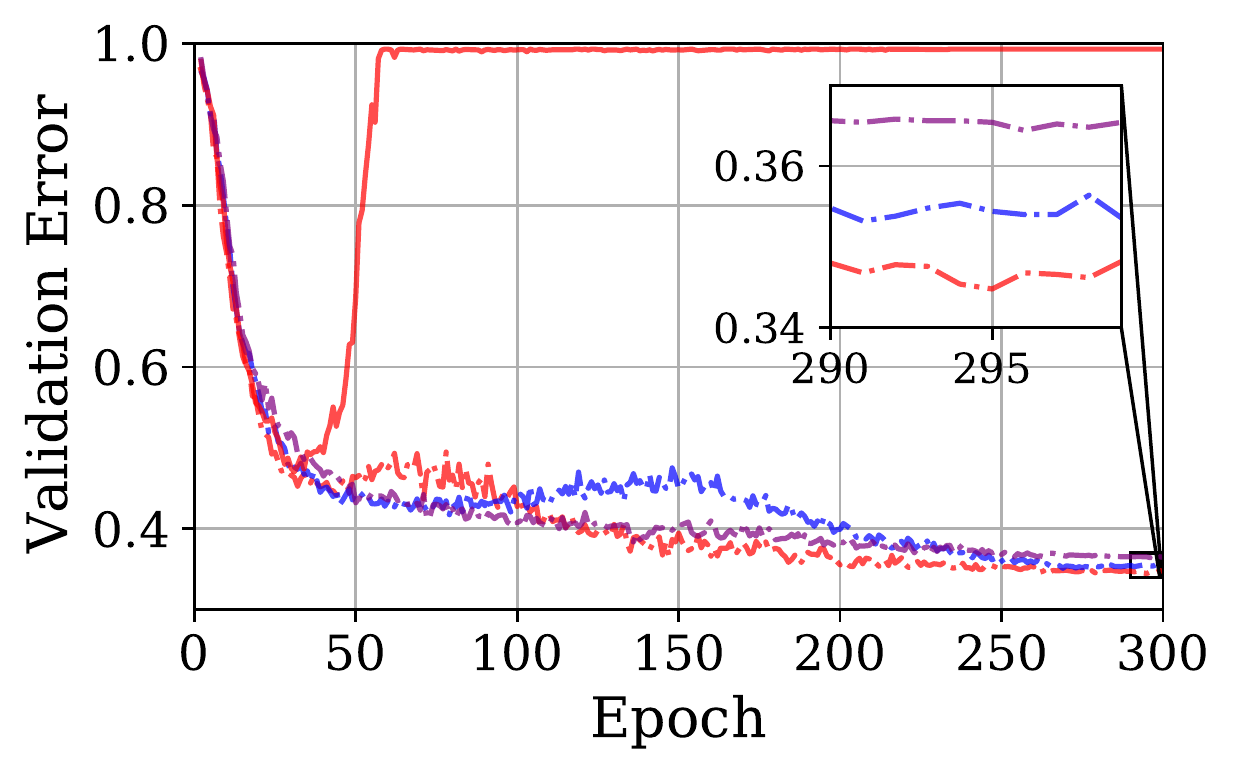}
		\caption{Val Error}
		\label{subfig:closevaladam}
	\end{subfigure}
	\begin{subfigure}[b]{0.205\textwidth}
		\includegraphics[width=\textwidth]{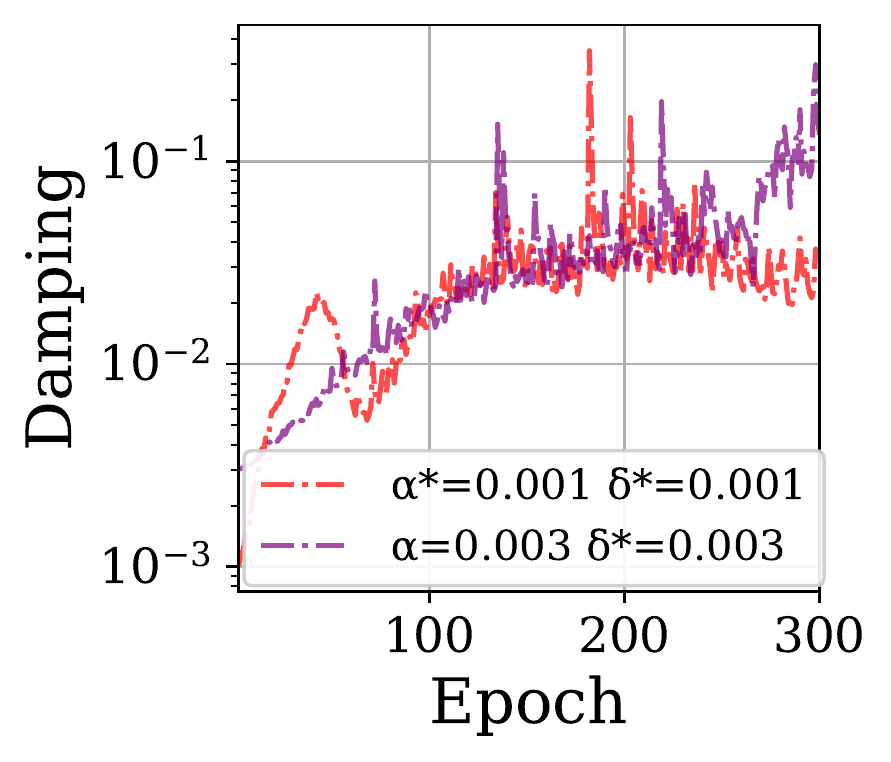}
		\caption{Damping}
		\label{subfig:closeadamdamp}
	\end{subfigure}
	\caption{VGG-$16$ on CIFAR-$100$ dataset using the Adam optimiser with $\gamma=0$ (no weight decay) for a learning rate of $\alpha$, batch size $B=128$ and damping set by $\delta$. For adaptive damping methods the damping is given an initial floor value of $\delta^{*}$ and is then updated using the variance of the Hessian every $100$ steps. $\alpha^{*}$ refers to the use of an alternative ramp up schedule where the base learning rate is increased by a factor of $5$ at the start of training before being decreased.}
	\label{fig:adamautodampclean}
\end{figure}

\section{Quality of adaptive Hessian approximations}\label{sec:quality}
We have seen so far the importance of sharp directions in the loss landscape, how adaptive methods bias away from movement in sharp directions and towards flat directions and we have seen how this issue can be addressed practically and in a principled manner using linear shrinkage theory. Behind all of this is the idea that using some approximation to the batch Hessian as a preconditioning matrix for gradients is helpful for optimisation. We have so far focused on mitigating the influence of noise in the batch Hessain compared to the true Hessian. Another natural question to ask is: how closely do adaptive preconditioning matrices approximate the batch Hessian?

\medskip
Adam uses the square root of the moving uncentered second moment of the per-parameter gradient \cite{kingma2014adam} that can be interpreted as a running diagonal approximation to the covariance of gradients, argued to be close to the Hessian for DNNs \cite{zhang2019algorithmic,jastrzebski2020the,fort2019stiffness,liu2019variance,he2019local}. Other methods employ Hessian--vector products, using either Lanczos or conjugate gradient techniques \cite{meurant2006lanczos} to approximate $\mH_{batch}^{-1}(\vw_{k})\nabla L_{batch}(\vw)$ \cite{martens2010deep,dauphin2014identifying}, which for a number of products $m \ll P$ can be seen as a low rank inverse approximation to $\mH_{batch}$, or direct inversion of Kronecker factored approximations \cite{martens2015optimizing}. 
\vspace{-7pt}
\paragraph{Are Adam and KFAC similar?} We validate the assertion that both Adam and KFAC approximate the batch Hessian to differing degrees of accuracy and hence fall under the same class of \textit{adaptive methods}. The batch loss to second order for any adaptive method is given by
\begin{equation}
\label{eq:ilikeitsharp}
	L(\vw-\alpha\mB^{-1}\nabla L) = L - \alpha (\nabla L)^T\mB^{-1/2}\bigg(1-\frac{\alpha}{2}\mB^{-1/2}\mH \mB^{-1/2}\bigg)\mB^{-1/2}\nabla L,
\end{equation} 
where we drop $\vw$ dependence and use the positive definiteness of $\mB$. This can derived using a Taylor expansion:
\begin{equation}
    L(\vw+\dw) = L(\vw)+\dw^{T}\nabla L(\vw) + \frac{1}{2}\dw^{T}\nabla \nabla L(\vw) \dw 
\end{equation}
Now substituting $\dw = \alpha \mB^{-1}\nabla L(\vw)$ and re-arranging we have
\begin{equation}
   \delta L =  L(\vw+\dw) - L(\vw) = \alpha (\nabla L(\vw))^{T}\mB^{-1}\nabla L(\vw) + \frac{\alpha^{2}}{2} (\nabla L(\vw))^{T}\mB^{-1}\nabla \nabla L(\vw) \mB^{-1}\nabla L(\vw)
   \nonumber
\end{equation}
from which the result (\ref{eq:ilikeitsharp}) follows immediately. From this expression we can derive conditions on the largest possible learning rate $\alpha$ for which the matrix on the right hand side of (\ref{eq:ilikeitsharp}) remains positive definite, namely $\alpha < 2( \| \mB^{-1/2}\mH \mB^{-1/2}\|_{\text{op}})^{-1}$. Now let $\{\eta_i, \vphi_i\}_{i=1}^P$ be the eigenvalues and normalised eigenvectors of $\mB$. We have \begin{align*}
    \| \mB^{-1/2}\mH \mB^{-1/2}\|_{\text{op}} = \max_i\{\eta^{-1}_i|\vphi_i^T\mH\vphi_i|\}.
\end{align*}
For SGD $\mB = I$ and so the largest stable $\alpha = 2(\| \mH\|_{\text{op}})^{-1}$, which we be expect to very small in practice. Now consider replacing $\eta_i$ by $\eta_i + \delta$. The largest stable learning rate is bounded above by \begin{align}\label{eq:quadratic_delta}
    2\left( \max_i \frac{|\vphi_i^T \mH \vphi_i|}{\eta_i + \delta}\right)^{-1}
\end{align}
so we again see that the very smallest $\eta_i$ can very much decrease the largest stable learning rate if $\delta$ is too small, but that increasing $\delta$ mollifies the effect of the very small $\eta_i$. 

Now set the damping $\delta=1$ and note that \begin{align}
    \sum_i \vphi_i^T \mH \vphi_i = \Tr \mU^T\mH \mU \sum_i\ve_i\ve_i^T = \Tr H ~ \implies ~ P^{-1} \|\mH\|_{\text{op}}  \max_i |\vphi_i^T \mH \vphi_i| \leq \|\mH\|_{\text{op}}
\end{align}
for some orthogonal matrix $\mU$, where $(\ve_{i})_{j} = \delta_{ij}$. So let us write $\max_i |\vphi_i^T \mH \vphi_i| = \xi \|\mH\|_{\text{op}}$ where $\xi\in [P^{-1}, 1]$. However let us view $\mH$ as a random matrix with eigenvalues of typical size $\mathcal{O}(\sqrt{P})$, so its entries are typically of size $\mathcal{O}(1)$ with high probability. If the eigenvectors of $\mH$ are in general position\footnote{We expect this e.g. in the case that $\mB$ is diagonal (such as in Adam) and $\mX$ has delocalised eigenvectors.} relative to $\{\vphi_i\}$, then $\vphi_i \mH\phi_i$ are the just the diagonal entries of $\mH$ in a general basis and so with high probability $\vphi_i \mH\phi_i = \|\mH\|_{\text{op}}\xi + o(1)$ for non-zero $\xi$ with $|\xi| < 1$. At the other extreme, if $\{\vphi_i\}$ is the eigenbasis of $\mH$, then $\phi_i^T \mH \phi_i$ are just the eigenvalues of $\mH$ and so the same result holds. We therefore generically expect $\max_i |\vphi_i\mH\vphi_i|$ to roughly track the largest eigenvalue of $\mH$ and so the largest stable learning rate is determined by how well the largest eigenvalue/eigenvector pair of $\mH$ is estimated by $\mB$. Hence under this setup the largest learning rate achievable is a direct measure of the estimation accuracy of the sharpest eigenvalue/eigenvector pairs of the batch Hessian. We run SGD, Adam and KFAC on the VGG-$16$ architecture on the CIFAR-$100$ dataset with no weight decay for $300$ epochs using a linear annealed schedule.

We search for the highest stable learning rate $\alpha$ along a logarithmic grid, with end points $\in (0.01,1)$. All methods incorporate a momentum of $\rho = 0.9$. We find that the largest \emph{stable} rates for SGD, Adam and KFAC are $0.01,0.12,0.32$ respectively, indicating that both KFAC and Adam are significantly better able to estimate sharp directions than curvature-blind SGD. Moreover, the larger stable value of KFAC reflects that KFAC is better able to approximate the sharpest directions than Adam, which is restricted to diagonal forms of $\mB$.

\section{Conclusion}\label{sec:conclusions}
We show using a spiked random matrix model for the batch loss of deep neural networks that we expect sharp directions of loss surface to retain more information about the true loss surface compared to flatter directions. For adaptive methods, which attempt to minimise an implicit local quadratic of the sampled loss surface, this leads to sub-optimal steps with worse generalisation performance.
We further investigate the effect of damping on the solution sharpness and find that increasing damping always decreases the solution sharpness, linking to prior work in this area. We find that for large neural networks an increase in damping both assists training and is even able to best the SGD test baseline. An interesting consequence of this finding is that it suggests that damping should be considered an essential hyper-parameter in adaptive gradient methods as it already is in stochastic second order methods. Moreover, our random matrix theory model motivates a novel interpretation of damping as linear shrinkage estimation of the Hessian. We establish the validity of this interpretation by using shrinkage estimation theory to derive an optimal adaptive damping scheme which we show experimentally to dramatically improve optimisation speed with adaptive methods \emph{and} closes the adaptive generalisation gap.

\medskip
Our work leaves open several directions for further investigation and extension. Mathematically, there is the considerable challenge of determining optimal assumptions on the network, loss function and data distribution such that the key outlier overlap result in Theorem \ref{theorem:overlap}, or sufficiently similar analogues thereof, can be obtained. On the experimental side, we have restricted ourselves to computer vision datasets and a small number of appropriate standard network architectures. These choices helped to maintain clarity on the key points of investigation, however they are clearly limiting. In particular, it would be natural to reconsider our investigations in situations for which adaptive optimisers typically obtain state of the art results, such as modern natural language processing \cite{devlin2018bert}. Practically speaking, we have proposed a novel, theoretically motivated and effective adaptive damping method, but it is reliant on relatively expensive Hessian variance estimates throughout training. Future work could focus on cheaper methods of obtaining the required variance estimates. 
\medskip
\bibliographystyle{apalike}
\bibliography{gadam}

\appendix

\section{Convergence of the noise spectrum to the semi-circle}\label{sec:app_sc}
As discussed in the main text, the precise form of the overlap given in Theorem \ref{theorem:overlap} requires the Hessian noise matrix to have the semi-circle as its limiting spectral density. This can be established under sensible assumptions and we present the argument in this appendix. The key tenet is that there is sufficient (but not full) independence between the elements of the noise matrix. Hence for the additive model the noise matrix converges to the semi-circle law. We reproduce the technical conditions and key elements in the proof of \cite{granziol2020learning} below for clarity.

To derive analytic results, we employ the Kolmogorov limit \cite{bun2017cleaning}, where $P,B,N \rightarrow \infty$ but $\frac{P}{B} = q > 0$ and to account for dependence beyond the symmetry of the noise matrix elements, we introduce the $\sigma$-algebras $\mathfrak{F}^{(i,j)}$, and Lindeberg's ratio $L_{P}(\tau)$, which are defined for any $\tau>0$ as follows: 
\begin{equation}
\begin{aligned}
    & 	    \mathfrak{F}^{(i,j)} :=  \sigma \{ \meps(\vw)_{kl}:1\leq k \leq l \leq P, (k,l) \neq (i,j) \},\\
    &1 \leq i \leq j \leq P \\
    & L_{P}(\tau) := \frac{1}{P^{2}} \sum_{i,j=1}^{P}\mathbb{E}|\meps(\vw)_{i,j}|^{2}\bm{1}(|\meps(\vw)_{i,j}|\geq \tau \sqrt{P}). \\
\end{aligned}
\end{equation}

Where $\meps(\vw)$ defines the matrix of fluctuations, which denotes the difference matrix between the true and empirical Hessians i.e.
\begin{equation}
\meps(\vw) = \mH_{\mathrm{true}}(\vw)-\mH_{\mathrm{emp}}(\vw)
\end{equation}

Rewriting the fluctuation matrix as
$\meps(\vw) \equiv \mH_{batch}(\vw) - \mH_{emp}(\vw)$ and assuming the mini-batch to be drawn independently from the dataset, we can infer
\newline
\begin{equation}
\label{eq:noisematrix}
\begin{aligned}
     \meps(\vw) = & \bigg(\frac{1}{B}-\frac{1}{N}\bigg)\sum_{j=1}^{B}\nabla^{2} \ell(\vx_{j},\vw;\vy_{j}) \\
     & - \frac{1}{N}\sum_{i=B+1}^{N}\nabla^{2} \ell(\vx_{i},\vw;\vy_{i}) \\
\end{aligned}
\end{equation}
thus $\mathbb{E}(\meps(\vw)_{j,k}) = 0$ and \newline $\mathbb{E}(\meps(\vw)_{j,k})^{2} = \bigg(\frac{1}{B}-\frac{1}{N}\bigg)\mathrm{Var}[\nabla^{2} \ell(\vx,\vw;\vy)_{j,k}]$.  
\newline
Where $B$ is the batch size and $N$ the total dataset size. The expectation is taken with respect to the data generating distribution $\psi(\vx,\vy)$. In order for the variance in Equation \ref{eq:noisematrix} to exist, the elements of $\nabla^{2} \ell(\vw,\vw;\vy)$ must obey sufficient moment conditions. This can either be assumed as a technical condition, or alternatively derived under the more familiar condition of $L$-Lipschitz continuity, as shown with the following Lemma
\begin{lemma}
\label{lemma:boundedhessianelements}
For a Lipschitz-continuous empirical risk gradient and almost everywhere twice differentiable loss function $\ell(h(\vx;\vw),\vy)$, the elements of the fluctuation matrix $\meps(\vw)_{j,k}$ are strictly bounded in the range $- \sqrt{P}L \leq \meps(\vw)_{j,k}\leq \sqrt{P}L$. Where $P$ is the number of model parameters and $L$ is a constant.
\end{lemma}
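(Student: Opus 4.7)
The plan is to chain together three elementary observations: (i) Lipschitz continuity of the gradient of each per-sample loss yields an operator-norm bound on the corresponding Hessian wherever it exists; (ii) this bound is inherited by any finite average, and in particular by both $\mH_{\text{batch}}(\vw)$ and $\mH_{\text{emp}}(\vw)$; (iii) an operator-norm bound on a symmetric matrix implies an entrywise bound (up to a factor of $\sqrt{P}$ via the Frobenius norm), after which the triangle inequality applied to $\meps(\vw) = \mH_{\text{batch}}(\vw) - \mH_{\text{emp}}(\vw)$ finishes the proof.

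First I would formalise step (i). Because $\ell(h(\vx;\vw),\vy)$ is twice differentiable almost everywhere in $\vw$ and its gradient is $L$-Lipschitz (either by hypothesis on the per-sample loss, or as a uniform consequence of Lipschitz continuity of the empirical-risk gradient), for almost every $\vw$ the per-sample Hessian satisfies $\|\nabla^2_{\vw}\ell(h(\vx;\vw),\vy)\|_{\text{op}}\leq L$. Writing $\mH_{\text{emp}}(\vw)$ and $\mH_{\text{batch}}(\vw)$ as finite convex combinations of per-sample Hessians and using sub-additivity of the operator norm then gives $\|\mH_{\text{emp}}(\vw)\|_{\text{op}},\ \|\mH_{\text{batch}}(\vw)\|_{\text{op}}\leq L$ almost everywhere in $\vw$.

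Next I would convert the spectral bound into an entrywise bound via the standard inequality, valid for any symmetric $\mM\in\mathbb{R}^{P\times P}$,
\begin{equation*}
|\mM_{j,k}| \;\leq\; \|\mM\|_F \;\leq\; \sqrt{P}\,\|\mM\|_{\text{op}},
\end{equation*}
where the second step uses that $\mM$ has at most $P$ real eigenvalues, each of modulus at most $\|\mM\|_{\text{op}}$. Applied to both Hessians this yields $|(\mH_{\text{emp}})_{j,k}|,\ |(\mH_{\text{batch}})_{j,k}|\leq \sqrt{P}\,L$, and a triangle inequality on each entry of $\meps$ produces
\begin{equation*}
|\meps(\vw)_{j,k}| \;=\; |(\mH_{\text{batch}}(\vw))_{j,k} - (\mH_{\text{emp}}(\vw))_{j,k}| \;\leq\; 2\sqrt{P}\,L,
\end{equation*}
which is the claimed bound after absorbing the constant factor $2$ into $L$.

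The main subtlety is step (i): the hypothesis is stated for the empirical-risk gradient, yet the cleanest route needs a per-sample Hessian bound. This can be handled either by strengthening the assumption to Lipschitz gradients at the per-sample level (as is standard and consistent with the rest of the paper's assumptions), or by observing that where the losses are twice continuously differentiable the Lipschitz constant of the empirical-risk gradient directly controls $\|\mH_{\text{emp}}\|_{\text{op}}$, and the analogous statement on any sub-sample controls $\|\mH_{\text{batch}}\|_{\text{op}}$ after absorbing a universal factor into $L$. It is also worth noting in passing that the bound is loose: from $|\mM_{j,k}|\leq \|\mM\|_{\text{op}}$ one could replace $\sqrt{P}L$ by $2L$, but the stated form suffices for the downstream Kolmogorov-limit and variance-scaling arguments it is used to support.
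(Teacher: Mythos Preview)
Your argument is correct and follows essentially the same route as the paper: bound the operator norm via the Lipschitz hypothesis, then pass to an entrywise bound through the Frobenius norm identity $\sum_{j,k}\meps_{j,k}^2=\sum_i\lambda_i^2\leq P\|\meps\|_{\text{op}}^2$. The only minor difference is that the paper applies the Lipschitz argument directly to the difference $\meps(\vw)=\mH_{\text{batch}}-\mH_{\text{emp}}$ (observing that the difference of two Lipschitz gradients is again Lipschitz, so $\|\meps(\vw)\|_{\text{op}}\leq L$ immediately), whereas you bound each Hessian separately and combine via the triangle inequality, incurring the harmless extra factor of $2$; your closing remark that even $\sqrt{P}$ is unnecessary (since $|\mM_{j,k}|\leq\|\mM\|_{\text{op}}$) is a valid sharpening that the paper does not mention.
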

\begin{proof}
As the gradient of the empirical risk is $L$ Lipschitz continous, as the empirical risk a sum over the samples, the gradient of the batch risk is also Lipschitz continous. As the difference of two Lipschitz functions is also Lipschitz, by the fundamental theorem of calculus and the definition of Lipschitz continuity the largest eigenvalue $\lambda_{max}$ of the fluctuation matrix $\meps(\vw)$ must be smaller than $L$. Hence using the Frobenius norm we can upper bound the matrix elements of $\meps(\vw)$
\begin{equation}
\begin{aligned}
 \text{Tr}(\meps(\vw)^{2})  = & \sum_{j,k=1}^{P}\meps(\vw)_{j,k}^{2}
= \meps(\vw)_{j=j',k=k'}^{2} \\ 
&+ \sum_{j\neq j',k\neq k'}^{P}\meps(\vw)_{j,k}^{2} = \sum_{i=1}^{P}\lambda_{i}^{2} \\
\end{aligned}
\end{equation}
thus $\meps(\vw)_{j=j',k=k'}^{2} \leq \sum_{i=1}^{P}\lambda_{i}^{2} \leq PL^{2}$ and $-\sqrt{P}L \leq \meps(\vw)_{j=j',k=k'} \leq \sqrt{P}L$.\\
\end{proof}
As the domain of the Hessian elements under the data generating distribution is bounded, the moments of Equation \ref{eq:noisematrix} are bounded and hence the variance exists. We can even go a step further with the following extra lemma.
\begin{lemma}
\label{lemma:normalelements}
For independent samples drawn from the data generating distribution and an $L$-Lipschitz loss $\ell$ the difference between the empirical Hessian and Batch Hessian converges element-wise to a zero mean, normal random variable with variance $\propto \frac{1}{B}-\frac{1}{N}$ for large $B,N$.
\end{lemma}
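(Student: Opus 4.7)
The plan is to express $\meps(\vw)_{j,k}$ as a linear combination of two \emph{independent} sums of i.i.d.\ bounded random variables, and then invoke the classical Central Limit Theorem. First, I would use the explicit decomposition in (\ref{eq:noisematrix}), which already separates $\meps(\vw)_{j,k}$ into a sum over the first $B$ samples and a distinct sum over the remaining $N-B$ samples. Because the data $(\vx_i,\vy_i)$ are drawn i.i.d.\ from the data-generating distribution, these two sums are independent of each other, and within each sum the summands are themselves i.i.d.

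Next, I would invoke Lemma~\ref{lemma:boundedhessianelements} to assert that, at fixed $P$, each summand $\nabla^2\ell(\vx_i,\vw;\vy_i)_{j,k}$ lies in $[-\sqrt{P}L,\sqrt{P}L]$. Since this is a deterministic bound independent of $B$ and $N$, all moments of the summands exist and the Lindeberg condition is trivially satisfied. Writing $\mu$ and $\sigma^2$ for the common mean and variance of a single Hessian element under the data-generating distribution, the classical i.i.d.\ CLT then applies to each of
\begin{equation*}
\bar Z_B \;=\; \tfrac{1}{B}\sum_{i=1}^{B}\nabla^2\ell(\vx_i,\vw;\vy_i)_{j,k}, \qquad \bar Z_{N-B} \;=\; \tfrac{1}{N-B}\sum_{i=B+1}^{N}\nabla^2\ell(\vx_i,\vw;\vy_i)_{j,k},
\end{equation*}
yielding, after the standard $\sqrt{B}$ and $\sqrt{N-B}$ rescalings, two independent Gaussian limits.

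Substituting into (\ref{eq:noisematrix}) and collecting terms gives the clean identity
\begin{equation*}
\meps(\vw)_{j,k} \;=\; \tfrac{N-B}{N}\bigl(\bar Z_B - \bar Z_{N-B}\bigr),
\end{equation*}
so $\meps(\vw)_{j,k}$ is asymptotically a linear combination of independent Gaussians and therefore itself Gaussian. The mean vanishes because the $\mu$ contributions cancel exactly, and a short computation gives variance
\begin{equation*}
\Bigl(\tfrac{N-B}{N}\Bigr)^{2}\!\left(\tfrac{\sigma^2}{B}+\tfrac{\sigma^2}{N-B}\right) \;=\; \sigma^2\Bigl(\tfrac{1}{B}-\tfrac{1}{N}\Bigr),
\end{equation*}
which matches the claimed $\propto 1/B - 1/N$ scaling.

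The only real obstacle is pedantic rather than mathematical: because the limiting variance tends to zero as $B,N\to\infty$, ``convergence'' must be interpreted after standardising, i.e.\ the correct formal statement is that $\meps(\vw)_{j,k}/\sqrt{1/B-1/N} \Rightarrow \mathcal{N}(0,\sigma^2)$. One also needs both $B\to\infty$ \emph{and} $N-B\to\infty$, which is implicit in ``large $B,N$'' under the convention $B<N$. If one wished to let $P$ grow with $B,N$, the Lindeberg condition would require reconsideration because the bound $\sqrt{P}L$ itself scales; that regime lies outside the stated hypothesis and I would not pursue it here.
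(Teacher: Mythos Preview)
Your proposal is correct and follows the same approach as the paper: invoke Lemma~\ref{lemma:boundedhessianelements} for boundedness (hence finite moments), then apply the i.i.d.\ CLT using the decomposition (\ref{eq:noisematrix}). The paper's own proof is in fact just a one-line sketch of exactly this argument, so your version is strictly more detailed---your explicit identity $\meps(\vw)_{j,k}=\tfrac{N-B}{N}(\bar Z_B-\bar Z_{N-B})$ and the variance computation are correct and make transparent what the paper leaves implicit, and your remark about standardising by $(1/B-1/N)^{-1/2}$ matches the paper's own formal conclusion.
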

\begin{proof}
By Lemma \ref{lemma:boundedhessianelements}, the Hessian elements are bounded, hence the moments are bounded and using independence of samples and the central limit theorem, $(\frac{1}{B}-\frac{1}{N})^{-1/2}[\nabla^{2} R_{true}(\vw)-\nabla^{2} R_{emp}(\vw)]_{jk} \xrightarrow[a.s]{} \mathcal{N}(0,\sigma_{jk}^{2})$. 
\end{proof}

\begin{theorem}
\label{theorem:beastingassumptions}
The following technical conditions that in the limit $P \rightarrow \infty$, the limiting spectra density of $\meps(\vw)(\vw)$ is given by Wigner's semi-circle law \cite{akemann2011oxford} \newline $i) \frac{1}{P^{2}}\sum_{i,j=1}^{P}\mathbb{E}|\mathbb{E}(\meps(\vw)_{i,j}^{2}|\mathfrak{F}^{i,j})-\sigma^{2}_{i,j}| \rightarrow 0$, \newline $ii) L_{P}(\tau) \rightarrow 0$ for any $\tau>0$, \newline $iii) \frac{1}{P}\sum_{i}^{P}|\frac{1}{P}\sum_{j=1}^{P}\sigma_{i,j}^{2}-\sigma_{e}^{2}| \rightarrow 0$ \newline  $iv) \max_{1\leq i \leq P} \frac{1}{P}\sum_{j=1}^{P}\sigma_{i,j}^{2} \leq C$ 
\vspace{5pt}
\end{theorem}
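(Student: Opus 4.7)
The plan is to prove the semi-circle convergence via the \emph{method of moments}, which is the standard route for Wigner-type matrices when one only has approximate independence encoded through a filtration. Concretely, for each fixed integer $k \geq 1$ I would aim to show that
\begin{equation*}
\frac{1}{P} \mathbb{E} \operatorname{Tr}\!\left( \frac{\meps(\vw)}{\sqrt{P}\,\sigma_e} \right)^{\!k} \;\longrightarrow\; m_k,
\end{equation*}
where $m_k = C_{k/2}$ (the $(k/2)$-th Catalan number) for $k$ even and $m_k=0$ for $k$ odd, these being the moments of the standard semi-circle law. A standard Borel--Cantelli/variance-of-the-trace argument, which is controlled by condition $(iv)$ giving uniform row-sum bounds on the variances $\sigma_{ij}^2$, then upgrades convergence in expectation to almost-sure weak convergence of the empirical spectral measure.

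The bulk of the work is in analysing the trace expansion
\begin{equation*}
\frac{1}{P^{1+k/2}} \sum_{i_1,\dots,i_k} \mathbb{E}\bigl[\meps(\vw)_{i_1 i_2}\,\meps(\vw)_{i_2 i_3}\cdots \meps(\vw)_{i_k i_1}\bigr].
\end{equation*}
Indexing each term by the partition induced on the edges $(i_\ell, i_{\ell+1})$, I would argue in three combinatorial steps: (a) any tuple in which some edge appears only once contributes zero, using the centering $\mathbb{E}\meps(\vw)_{ij}=0$ together with condition $(i)$ to peel off a conditional expectation via the tower property on $\mathfrak{F}^{(i,j)}$; (b) tuples with an edge appearing three or more times contribute $o(1)$ after rescaling, because the number of free indices is deficient and condition $(iv)$ bounds the residual second moments while condition $(ii)$ (Lindeberg) controls any contribution from atypically large entries through a truncation at scale $\tau\sqrt{P}$; (c) pair partitions with crossings have fewer distinct indices and are suppressed by a factor $P^{-1}$, so only the $C_{k/2}$ non-crossing pair partitions survive. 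Condition $(iii)$ is then used to replace each surviving product $\prod_{e\in\pi}\sigma_{e}^2$ by the uniform value $\sigma_e^k$, recovering the classical Wigner moments.

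The main obstacle is step (a)--(b): unwinding the dependence structure via the filtration $\{\mathfrak{F}^{(i,j)}\}$ without assuming full independence of entries. In the classical Wigner setting independence trivially factors $\mathbb{E}\prod_e \meps_e^2$ into $\prod_e\sigma_e^2$; here one must perform an inductive tower expansion along the edges of each pair partition, and at each stage use condition $(i)$ to replace $\mathbb{E}(\meps_{ij}^2\mid \mathfrak{F}^{(i,j)})$ by $\sigma_{ij}^2$ up to an $L^1$ error that is $o(1)$ on average over indices. Carefully bookkeeping that these accumulated errors, combined with the Lindeberg truncation from $(ii)$, remain $o(1)$ after summing $P^{k/2}$ terms is the delicate part; the remaining steps are routine once the pair-partition structure has been isolated. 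With this in hand, the lemmas of the preceding appendix (in particular Lemma~\ref{lemma:normalelements}, guaranteeing asymptotic normality and existence of $\sigma_{ij}^2$) verify the premises of conditions $(i)$--$(iv)$ for the specific fluctuation matrix $\meps(\vw)$ arising from i.i.d.\ data sampling, completing the argument.
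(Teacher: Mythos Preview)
Your approach is correct in outline but takes a substantially different route from the paper. The paper's proof is essentially a \emph{citation}: after noting that Lemma~\ref{lemma:normalelements} gives the conditional centering $\mathbb{E}(\meps(\vw)_{i,j}\mid\mathfrak{F}^{(i,j)})=0$ and verifies the Lindeberg condition $(ii)$, it observes that conditions $(i)$--$(iv)$ are precisely the hypotheses of the dependent-entry semi-circle theorem of G\"otze and collaborators \cite{gotze2012semicircle}, and invokes that result directly. No moment combinatorics are performed in the paper; the entire analytic burden is outsourced to the reference. Your proposal, by contrast, is to reprove the G\"otze-type theorem from scratch via the method of moments. That is a legitimate and self-contained strategy, and the combinatorial structure you describe (non-crossing pair partitions surviving, crossings and higher multiplicities suppressed) is the right one. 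What it buys is independence from an external black box; what it costs is several pages of careful estimates that the paper simply avoids.

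Two points of caution if you pursue the moment route. First, in your step (a) you appeal to ``the centering $\mathbb{E}\meps(\vw)_{ij}=0$ together with condition $(i)$'' to kill singleton edges. Unconditional centering is not enough here, since the entries are dependent; you need the \emph{conditional} centering $\mathbb{E}(\meps(\vw)_{ij}\mid\mathfrak{F}^{(i,j)})=0$, which is not among conditions $(i)$--$(iv)$ but is supplied separately by Lemma~\ref{lemma:normalelements}. The paper's proof states this explicitly; make sure your tower argument uses the conditional version. Second, condition $(i)$ gives only an \emph{averaged} $L^1$ control on $\mathbb{E}(\meps_{ij}^2\mid\mathfrak{F}^{(i,j)})-\sigma_{ij}^2$, and the trace sum has $P^k$ terms normalised by $P^{1+k/2}$. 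Showing the accumulated replacement errors are $o(1)$ after this normalisation is exactly the delicate part of the G\"otze--Tikhomirov argument, and in fact their original proof uses the Stieltjes transform rather than moments partly because it handles such averaged hypotheses more gracefully. Your outline acknowledges this is ``the delicate part''; be aware it is genuinely nontrivial and may require auxiliary truncation or martingale-difference decompositions beyond what you have sketched.
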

\begin{proof}
Lindenberg's ratio is defined as $L_{P}(\tau) := \frac{1}{P^{2}} \sum_{i,j=1}^{P}\mathbb{E}|\meps(\vw)_{i,j}|^{2}\bm{1}(|\meps(\vw)_{i,j}|\geq \tau \sqrt{P})$.
By Lemma \ref{lemma:normalelements}, the tails of the normal distribution decay sufficiently rapidly such that $L_{P}(\tau) \rightarrow 0$ for any $\tau>0$ in the $P \rightarrow \infty$ limit. Alternatively, using the Frobenius identity and Lipschitz continuity $\sum_{i,j=1}^{P}\mathbb{E}|\meps(\vw)_{i,j}|^{2}\bm{1}(|\meps(\vw)_{i,j}|\geq \tau \sqrt{P}) \leq \sum_{i,j}^{P}\meps(\vw)_{i,j}^{2} = \sum_{i}^{P}\lambda_{i}^{2} \leq PL^{2}$, $L_{P}(\tau) \rightarrow 0$ for any $\tau>0$. 
\newline By Lemma \ref{lemma:normalelements} we also have $\mathbb{E}(\meps(\vw)_{i,j}|\mathfrak{F}^{i,j})=0$. Hence along with conditions $(i), (ii), (iii)$ the matrix $\meps(\vw)$ satisfies the conditions in \cite{gotze2012semicircle} and the and the limiting spectral density $p(\lambda)$ of $\meps(\vw) \in \mathbb{R}^{P\times P}$ converges to the semi circle law $p(\lambda) = \frac{\sqrt{4\sigma_{\epsilon}^{2}-\lambda^{2}}}{2\pi\sigma_{\epsilon}^{2}}$ 
\cite{gotze2012semicircle}. 
\newline \cite{gotze2012semicircle} use the condition $\frac{1}{P}\sum_{i=1}^{P}|\frac{1}{P}\sum_{j=1}^{P}\sigma_{i,j}^{2}-1| \rightarrow 0$, however this simply introduces a simple scaling factor, which is accounted for in condition $ii)$ and the corresponding variance per element of the limiting semi-circle.
\end{proof}
We note that under the assumption of independence between all the elements of $\meps(\vw)$ we would have obtained the same result, as long as conditions $ii)$ and $iii)$ were obeyed. So in simple words, condition $9i)$ merely states that the dependence between the elements cannot be too large. For example completely dependent elements have a second moment expectation that scales as $P^{2}$ and hence condition $(i)$ cannot be satisfied. Condition $(ii)$ merely states that there cannot be too much variation in the variances per element and condition $(iii)$ that the variances are bounded. 

\end{document}